\renewcommand{\star}{*}
\newcommand{\R}{{\mathbb R}}
\newcommand{\cN}{\mathcal{N}} 
\newcommand{\tG}{\widetilde{G}}
\newcommand{\beps}{\varepsilon} 
\newcommand{\bx}{{\mathbf x}}
\newcommand{\by}{{\mathbf y}}
\newcommand{\unx}{{\mathbf z}}
\newcommand{\uny}{\by}
\newcommand{\RR}{\R}
\newcommand{\IP}{{\sf IP}}
\newcommand{\LP}{{\sf LP}}
\newcommand{\DL}{{\sf DUAL}}
\newcommand{\CPe}{{\sf CP}$(\beps)$}
\newcommand{\DESCENT}{{\sf DESCENT}}
\newcommand{\EST}{{\sf EST}}
\newcommand{\AL}{{\sf ALGO}}
\newcommand{\lf}{\left}
\newcommand{\rf}{\right}
\newcommand{\bone}{\mathbf{1}}
\newcommand{\beq}{\begin{eqnarray}}
\newcommand{\eeq}{\end{eqnarray}}
\newcommand{\beqn}{\begin{eqnarray*}}
\newcommand{\eeqn}{\end{eqnarray*}}
\newtheorem{theorem}{Theorem}[section]
\newtheorem{lemma}{Lemma}[section]
\newtheorem{example}{Example}[section]
\begin{document}

\title{Message-passing for Maximum Weight Independent Set}
% use \thanks{} to gain access to the first footnote area
% a separate \thanks must be used for each paragraph as LaTeX2e's \thanks
% was not built to handle multiple paragraphs
%

\author{Sujay~Sanghavi \qquad Devavrat~Shah \qquad Alan Willsky%~\IEEEmembership{Member, IEEE}
\thanks{All authors are with the Department of Electrical Engineering 
and Computer Science,  Massachusetts Institute of Technology, Cambridge, 
MA 02139 USA.  Email: {\tt \{sanghavi,devavrat,willsky\}@mit.edu}}
\thanks{This work was partially supported by NSF CNS-0546590, XXX.}
%\thanks{Manuscript received April 19, 2005; revised January 11, 2007.}
}

% The paper headers
\markboth{Submitted to IEEE Transaction on Information Theory}%
{Shell \MakeLowercase{\textit{et al.}}: Bare Demo of IEEEtran.cls for Journals}

\maketitle

\begin{abstract}

We investigate the use of message-passing algorithms for the problem
of finding the max-weight independent set (MWIS) in a graph. First, we
study the performance of the classical loopy max-product belief
propagation. We show that each fixed point estimate of max-product can
be mapped in a natural way to an extreme point of the LP polytope
associated with the MWIS problem. However, this extreme point may not
be the one that maximizes the value of node weights; the particular
extreme point at final convergence depends on the initialization of
max-product. We then show that if max-product is started from the
natural initialization of uninformative messages, it always solves the
correct LP -- if it converges. This result is obtained via a direct
analysis of the iterative algorithm, and cannot be obtained by looking
only at fixed points.

The tightness of the LP relaxation is thus necessary for max-product
optimality, but it is not sufficient. Motivated by this observation,
we show that a simple modification of max-product becomes gradient
descent on (a convexified version of) the dual of the LP, and
converges to the dual optimum. We also develop a message-passing
algorithm that recovers the primal MWIS solution from the output of
the descent algorithm. We show that the MWIS estimate obtained using
these two algorithms in conjunction is correct when the graph is
bipartite and the MWIS is unique.

Finally, we show that any problem of MAP estimation for probability
distributions over finite domains can be reduced to an MWIS
problem. We believe this reduction will yield new insights and
algorithms for MAP estimation.

%study the question of designing message-passing algorithm for finding 
%maximum weight indepedent set (MWIS) problem with two primarily motivations. 
%First, we show that finding maximum aposteriori (MAP) assignment for 
%any\footnote{finite valued and all elements have positive probability - 
%equivalently, finite valued exponential family} MRF is equivalent to 
%MWIS in an extended graph. Second, MWIS comes up in many applications 
%that require simple, distributed algorithmic solutions. 

%The max-product is a good candidate message-passing algorithm. To understand
%its performance, we study properties of its fixed points. We obtain sufficient
%conditions for loopy graphs under which max-product fixed points imply
%correct answer. We find a tantalizing similarity between max-product fixed 
%point characterization and complimentary slackness conditions of linear 
%programing relaxation. However, we find examples that imply that neither of these two methods
%dominate the other for MWIS problem. 

%We present a modification of max-product that converges to the dual
%optimal of the linear programming relaxation for arbitrary graph. 

\end{abstract}

\section{Introduction}

The max-weight independent set (MWIS) problem is the following: given
a graph with positive weights on the nodes, find the heaviest set of
mutually non-adjacent nodes. MWIS is a well studied combinatorial
optimization problem that naturally arises in many applications. It is
known to be NP-hard, and hard to approximate \cite{Tre}. In this paper
we investigate the use of message-passing algorithms, like loopy
max-product belief propagation, as practical solutions for the MWIS
problem.  We now summarize our motivations for doing so, and then
outline our contribution.

Our primary motivation comes from applications. The MWIS problem
arises naturally in many scenarios involving resource allocation in
the presence of interference. It is often the case that large
instances of the weighted independent set problem need to be (at least
approximately) solved in a distributed manner using lightweight data
structures. In Section \ref{sec:applic} we describe one such
application: scheduling channel access and transmissions in wireless
networks. Message passing algorithms provide a promising alternative
to current scheduling algorithms.

Another, equally important, motivation is the potential for obtaining
new insights into the performance of existing message-passing
algorithms, especially on loopy graphs. Tantalizing connections have
been established between such algorithms and more traditional
approaches like linear programming (see \cite{BSS07},\cite{sanghavi}
\cite{chenweiss} and references therein). We consider MWIS problem to
understand this connection as it provides a rich (it is NP-hard), yet
relatively (analytically) tractable, framework to investigate such
connections.

\subsection{Our contributions}

%In Section \ref{sec:maxprod} 

In Section \ref{sec:prelim} we formally describe the MWIS problem,
formulate it as an integer progam, and present its natural LP
relaxation. We also describe how the MWIS problem arises in wireless
network scheduling.

%We construct a probability distribution whose MAP estimation 
%corresponds to the MWIS of a given graph, and investigate the 
%application of the loopy Max-product algorithm to this distritbuion.
%We discuss an application of max-product for finding MWIS in the
%context of an important network scheduling problem. This is
%described in Section \ref{sec:one}. 

%Next, to establish importance of finding MWIS in the context of MAP
%estimation framework, in Section \ref{sec:map_as_mwis} we show that
%any that {\em any} problem of MAP estimation in which the random
%variables can take a finite number of values, and the probability
%distribution is positive over the entire domain, can be reduced to the
%MWIS problem on an appropriately constructed auxillary graph.
% For example, MAP estimation
%in any distribution that belongs to exponential family 
%describeb by a finite-valued pair-wise MRF can be formulated
%as finding MWIS over an appropriate graph. 
%This implies that any algorithm for solving the independent set
%problem immediately yields an algorithm for MAP estimation. We believe
%that this reduction will prove useful from both practical and
%analytical perspectives.

In Section \ref{sec:max_prod_for_mwis}, we first describe how we
propose using max-product (as a heuristic) for solving the MWIS
problem. Specifically, we construct a probability distribution whose
MAP estimate is the MWIS of the given graph. Max-product, which is a
heuristic for finding MAP estimates, emerges naturally from this
construction. 

Max-product is an iterative algorithm, and is typically executed until
it converges to a fixed point. In Section \ref{sec:fixed_p} we show
that fixed points always exist, and characterize their
structure. Specifically, we show that there is a one-to-one map
between estimates of fixed points, and extreme points of the
independent set LP polytope. This polytope is defined only by the
graph, and each of its extrema corresponds to the LP optimum for a
different node weight function.  This implies that max-product fixed
points attempt to solve (the LP relaxation of) an MWIS problem on the
correct graph, but with different (possibly incorrect) node
weights. This stands in contrast to its performance for the weighted
matching problem \cite{BSS07,sanghavi,bayati-msr}, for which it is
known to {\em always} solve the LP with correct weights.

Since max-product is a deterministic algorithm, the particular fixed
point (if any) that is reached depends on the initialization. In
Section \ref{sec:iter_anal} we pursue an alternative line of analysis,
and directly investigate the performance of the iterative algorithm
itself, started from the ``natural'' initialization of uninformative
messages. Fot this case, we show that max-product estimates exactly
correspond to the {\em true} LP, at all times -- not just the fixed
point.

Max-product bears a striking semantic similarity to dual coordinate
descent on the LP. With the intention of modifying max-product to make
it as powerful as LP, in Section \ref{sec:algo} we develop two
iterative message-passing algorithms. The first, obtained by a minor
modification of max-product, approximately calculates the optimal
solution to the dual of the LP relaxation of the MWIS problem. It does
this via coordinate descent on a convexified version of the dual. The
second algorithm uses this approximate optimal dual to produce an
estimate of the MWIS. This estimate is correct when the original graph
is bipartite. We believe that this algorithm should be of broader
interest.

The above uses of max-product for MWIS involved posing the MWIS as a
MAP estimation problem. In the final Section \ref{sec:map_as_mwis}, we
do the reverse: we show how {\em any} MAP estimation problem on finite
domains can be converted into a MWIS problem on a suitably constructed
auxillary graph. This implies that any algorithm for solving the
independent set problem immediately yields an algorithm for MAP
estimation. This reduction may prove useful from both practical and
analytical perspectives.

\section{Max-weight Independent Set, and its LP Relaxation}\label{sec:prelim}

Consider a graph $G = (V,E)$, with a set $V$ of nodes and a set $E$ of
edges. Let $\cN(i)=\{j \in V: (i,j) \in E\}$ be the neighbors of $i
\in V$.  Positive weights $w_i, i\in V$ are associated with each node.
A subset of $V$ will be represented by vector $\bx = (x_i) \in
\{0,1\}^{|V|}$, where $x_i = 1$ means $i$ is in the subset $x_i = 0$
means $i$ is not in the subset. A subset $\bx$ is called an {\em
independent set} if no two nodes in the subset are connected by an
edge: $(x_i, x_j) \neq (1,1)$ for all $(i,j) \in E$. We are interested
in finding a maximum weight independent set (MWIS) $\bx^*$.  This can
be naturally posed as an integer program, denoted below by \IP.~ The
{\em linear programing relaxation} of \IP~ is obtained by replacing
the integrality constraints $x_i\in \{0,1\}$ with the constraints
$x_i\geq 0$. We will denote the corresponding linear program by
\LP.~The dual of \LP~ is denoted below by \DL.

%{\centering
%\begin{tabular}{ccc}
%\parbox[t]{2in}{
\begin{eqnarray*}
\IP : & & ~{\sf max} ~~ \sum_{i=1}^n w_i x_i, \\
{\sf s.t.} & & x_i + x_j \leq 1 ~~~~\text{for all}~ (i,j) \in E, \\
& & ~ x_i\in \{0,1\}.
\end{eqnarray*}
%} & \hspace{0.4in} &
%\parbox[t]{2in}{
\begin{eqnarray*}
\LP : & & ~{\sf max} ~~ \sum_{i=1}^n w_i x_i, \\
{\sf s.t.} & & x_i + x_j \leq 1 ~~~~\text{for all}~ (i,j) \in E, \\
& & ~ x_i\geq 0.
\end{eqnarray*}
\begin{eqnarray*}
\DL : & & {\sf min} ~~ \sum_{(i,j) \in E}   \lambda_{ij}, \\
{\sf s.t.} & &  \sum_{j\in \cN(i)} \lambda_{ij} ~ \geq ~ w_i,
~\text{for all} ~ i \in V, \\ 
& & \lambda_{ij} \geq 0, ~\mbox{for all} ~(i,j) \in E. 
\end{eqnarray*}
%}
%\end{tabular}}
It is well-known that \LP~ can be solved efficiently, and if it has an
integral optimal solution then this solution is an MWIS of $G$. If
this is the case, we say that there is no {\em integrality gap}
between \LP~ and \IP~ or equivalently that the \LP~relaxation is {\em
tight}.  

\subsection*{Properties of the \LP}

We now briefly state some of the well-known properties of the MWIS
\LP, as these will be used/referred to in the paper. The polytope of
the \LP~ is the set of feasible points for the linear program. An
extreme point of the polytope is one that cannot be expressed as a
convex combination of other points in the polytope. 

\begin{lemma}\label{lem:halfint}{(\cite{schrijver}, Theorem 64.7)}
The \LP~ polytope has the following properties
\begin{enumerate}
\item For any graph, the MWIS \LP~ polytope is half-integral: any
  extreme point will have each $x_i = 0, 1$ or $\frac{1}{2}$. 
\item For bipartite graps the \LP~ polytope is integral: each extreme
  point will have $x_i = 0$ or 1.
\end{enumerate}
\end{lemma}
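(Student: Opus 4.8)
The plan is to prove both parts by a single perturbation technique: take an arbitrary extreme point $\bx$ of the \LP~polytope and show that if it violates the claimed integrality property, then $\bx$ can be written as the midpoint of two distinct feasible points, contradicting extremality. For part~1 (half-integrality), suppose some coordinate of $\bx$ lies outside $\{0,\tfrac{1}{2},1\}$. I would define a perturbation direction $\bd$ by $d_i = +1$ if $x_i > \tfrac{1}{2}$, $d_i = -1$ if $0 < x_i < \tfrac{1}{2}$, and $d_i = 0$ if $x_i \in \{0,\tfrac{1}{2},1\}$, and then verify that $\bx^{+} = \bx + \epsilon\bd$ and $\bx^{-} = \bx - \epsilon\bd$ are both feasible for all sufficiently small $\epsilon > 0$. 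Nonnegativity is immediate, since every coordinate that moves has $0 < x_i$, and those that are decreased have $x_i$ strictly positive.

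The crux of part~1 is the edge constraints $x_i + x_j \le 1$. On edges with positive slack, a small $\epsilon$ clearly preserves feasibility; on tight edges ($x_i + x_j = 1$) I would show that $d_i + d_j = 0$, so that the tight constraint is exactly preserved. This follows from a short case analysis dictated by the sign convention: if $x_i > \tfrac{1}{2}$ then $x_j = 1 - x_i < \tfrac{1}{2}$, giving $d_i = +1$ and $d_j = -1$; if $x_i = x_j = \tfrac{1}{2}$ then $d_i = d_j = 0$; and the integral cases $\{x_i,x_j\} = \{0,1\}$ also give $d_i = d_j = 0$. Since $\bx = \tfrac{1}{2}(\bx^{+} + \bx^{-})$ with $\bd \neq 0$ by our assumption, this contradicts extremality, forcing every coordinate of $\bx$ into $\{0,\tfrac{1}{2},1\}$.

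For part~2 I would exploit the bipartition $V = A \cup B$. Using part~1, $\bx$ is half-integral; let $H = \{i : x_i = \tfrac{1}{2}\}$ and suppose $H \neq \emptyset$. Define $\bd$ by $d_i = +1$ for $i \in H \cap A$, $d_i = -1$ for $i \in H \cap B$, and $d_i = 0$ otherwise, and again check feasibility of $\bx \pm \epsilon\bd$. For an edge $(i,j)$ with both endpoints in $H$, bipartiteness places one endpoint in $A$ and the other in $B$, so $d_i + d_j = 0$ and the tight constraint $x_i + x_j = 1$ is preserved. For an edge with exactly one endpoint, say $i$, in $H$, the other value is integral and the constraint $\tfrac{1}{2} + x_j \le 1$ forces $x_j = 0$, so that constraint carries slack $\tfrac{1}{2}$ and survives the perturbation; edges with neither endpoint in $H$ are unchanged. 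This again exhibits $\bx$ as a midpoint of two distinct feasible points, contradicting extremality, so $H = \emptyset$ and $\bx$ is integral.

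The main obstacle is the edge-constraint bookkeeping in part~1: I must confirm that the single direction $\bd$ annihilates every tight edge constraint simultaneously, which is precisely what the sign convention (increase above $\tfrac{1}{2}$, decrease below) is engineered to guarantee. Once this invariant is in place, both parts reduce to the same extremality argument, with the bipartition in part~2 supplying the consistent $A/B$ orientation needed to cancel the contributions on tight edges joining two half-valued nodes.
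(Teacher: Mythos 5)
Your proof is correct, and it is worth noting that the paper itself contains no proof of this lemma --- it is imported verbatim from Schrijver (Theorem 64.7 and its corollaries), so there is no in-paper argument to compare against. Your midpoint-perturbation argument is the classical self-contained route (essentially the Nemhauser--Trotter/Balinski proof of half-integrality of the fractional stable set polytope): your sign convention does exactly the right thing on tight edges, since a tight edge $x_i + x_j = 1$ with $x_i \in (\tfrac{1}{2},1)$ forces $x_j \in (0,\tfrac{1}{2})$, so $d_i + d_j = 0$, and the remaining tight configurations $(\tfrac{1}{2},\tfrac{1}{2})$ and $(0,1)$ contribute nothing; the bipartite part then correctly reuses the same template with the $A/B$ orientation. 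The textbook route is different in flavor: Schrijver derives half-integrality by passing to the bipartite double cover of $G$ (where integrality follows from total unimodularity of the bipartite node--edge incidence matrix, i.e.\ from K\H{o}nig-type duality) and folding an integral extreme point of the doubled problem back to a half-integral point of the original; your approach buys a direct, elementary argument with no polyhedral machinery, at the cost of the edge-by-edge bookkeeping you identified. Two small points to tighten. First, your sign convention is stated ambiguously ($d_i = +1$ if $x_i > \tfrac{1}{2}$ overlaps with $d_i = 0$ if $x_i = 1$); say explicitly that $d_i = +1$ only for $x_i \in (\tfrac{1}{2},1)$, which is what your case analysis actually uses. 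Second, the \LP~as written in the paper has no explicit constraint $x_i \le 1$, so your tight-edge case analysis tacitly assumes $x_i, x_j \in [0,1]$; this is harmless but deserves one sentence: any node with a neighbor satisfies $x_i \le 1$ because $x_i + x_j \le 1$ and $x_j \ge 0$, while an isolated node must have $x_i = 0$ at any extreme point (otherwise $x_i \pm \epsilon$ is feasible), after which your argument is airtight.
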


Half-integrality is an intriguing property that holds for LP
relaxations of a few combinatorial problems (e.g. vertex cover,
matchings etc.). Half integrality implies that any extremum optimum of
\LP~ will have some nodes set to 1, and all their neighbors set to
0. The nodes set to $\frac{1}{2}$ will appear in clusters: each such
node will have at least one other neighbor also set to
$\frac{1}{2}$. We will see later that a similar structure arises in
max-product fixed points.

\begin{lemma}{(\cite{schrijver}, Corollary 64.9a)}
\LP~ optima are {\em partially correct}: for any graph, any \LP~
optimum $x^*$ and any node $i$, if the mass $x^*_i$ is integral then
there exists an MWIS for which that node's membership is given by
$x^*_i$.
\end{lemma}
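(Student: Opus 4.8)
The plan is to establish the stronger \emph{persistency} statement that pins down all integral coordinates at once, and then read off the single-node claim. First I would reduce the arbitrary optimum $x^*$ to a half-integral one. The set of \LP~optima is a bounded face $F$ of the polytope, so $x^*$ is a convex combination $x^* = \sum_k \alpha_k \tilde{x}^{(k)}$ with $\alpha_k > 0$ of extreme points of $F$, each of which is an extreme point of the full polytope and hence half-integral by Lemma~\ref{lem:halfint}, and each of which is itself an \LP~optimum. If $x_i^*$ is integral, then since every $\tilde{x}^{(k)}_i \in [0,1]$ and their $\alpha$-weighted average equals $x_i^* \in \{0,1\}$, every $\tilde{x}^{(k)}_i$ must equal $x_i^*$. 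I may therefore replace $x^*$ by one such half-integral optimum $\tilde{x}$ with $\tilde{x}_i = x_i^*$, and work with $\tilde{x}$ from now on.

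Next, partition the vertices according to $\tilde{x}$ into the sets $V_0, V_{1/2}, V_1$ where $\tilde{x}$ takes the values $0, \tfrac12, 1$. I would fix an arbitrary MWIS $S$ and define the candidate $S' = (S \cap V_{1/2}) \cup V_1$. Since $V_1 \subseteq S'$ and $S' \cap V_0 = \emptyset$ by construction, showing that $S'$ is again an MWIS yields a maximum weight independent set whose membership of $i$ matches $\tilde{x}_i = x_i^*$, which is exactly the claim. Independence of $S'$ is immediate from feasibility of $\tilde{x}$: an edge inside $V_1$, or between $V_1$ and $V_{1/2}$, would force the two values of $\tilde{x}$ to sum to more than $1$, and $S \cap V_{1/2}$ is independent as a subset of $S$.

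The one genuine computation is the weight inequality $w(S') \geq w(S)$, which, writing $w(A) = \sum_{v \in A} w_v$, reduces to $w(V_1 \setminus S) \geq w(V_0 \cap S)$. I would prove this by a feasible perturbation combined with optimality. For small $\epsilon > 0$, lower $\tilde{x}$ by $\epsilon$ on all of $V_1 \setminus S$ and raise it by $\epsilon$ on all of $V_0 \cap S$, calling the resulting point $z$. The objective then changes by $\epsilon\big(w(V_0 \cap S) - w(V_1 \setminus S)\big)$, so if the desired inequality failed, $z$ would strictly beat the \LP~optimum $\tilde{x}$ --- a contradiction, provided $z$ is feasible.

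Checking feasibility is the step I expect to be the crux, and it is exactly where half-integrality earns its keep. Nonnegativity is clear, and the only edge constraints that could be threatened are those incident to a raised node $u \in V_0 \cap S$. Its neighbor $v$ cannot also lie in $V_0 \cap S$ (two adjacent members of the independent set $S$), so $v$ is not raised; moreover if $\tilde{x}_v = 1$ then $v \in V_1$, and $v \notin S$ since $v$ is adjacent to $u \in S$, so $v \in V_1 \setminus S$ and is \emph{simultaneously lowered} by $\epsilon$, keeping $z_u + z_v \leq 1$. All remaining configurations leave strict slack for small $\epsilon$. Hence $z$ is feasible, the weight inequality holds, $S'$ is an MWIS, and the argument closes.
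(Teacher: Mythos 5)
Your proof is correct, and it is worth noting that the paper itself offers no proof of this lemma at all --- it is quoted from Schrijver (Corollary 64.9a) --- so your argument is a genuine addition rather than a variant of the paper's text. What you prove is in fact the stronger Nemhauser--Trotter \emph{persistency} property: a single MWIS $S' = (S \cap V_{1/2}) \cup V_1$ that agrees with \emph{all} integral coordinates of a half-integral optimum simultaneously, from which the one-node statement is read off. Both ingredients are sound: the reduction to a half-integral optimum via the decomposition of the optimal face into extreme points (using Lemma \ref{lem:halfint}, with the positive convex weights forcing every $\tilde{x}^{(k)}_i$ to match an integral $x^*_i$), and the $\pm\epsilon$ perturbation on $V_1 \setminus S$ and $V_0 \cap S$, where optimality of $\tilde{x}$ rules out $w(V_0 \cap S) > w(V_1 \setminus S)$. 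Your feasibility check correctly identifies the only tight constraints at risk --- an edge from a raised node $u \in V_0 \cap S$ to a neighbor $v \in V_1$, where $v \notin S$ (as $u \in S$ and $S$ is independent) forces $v \in V_1 \setminus S$, so $v$ is lowered in tandem. Pleasingly, this perturbation device is exactly the one the paper deploys later in the proofs of Lemmas \ref{lem:set_flip_lp_mp} and \ref{lem:feas} for Theorem \ref{thm:lp_mp}, so your proof fits the paper's own toolkit. One hair worth smoothing: boundedness of the optimal face (so that it is the convex hull of its extreme points, and each $\tilde{x}^{(k)}_i \in [0,1]$) relies on the implicit bound $x_i \leq 1$, which the edge constraints supply only when $i$ has a neighbor; but if $G$ had an isolated node, the positive weights would make \LP~ unbounded with no optimum at all, so the statement is vacuous there and nothing is lost.
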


The next lemma states the standard complimentary slackness conditions
of linear programming, specialized for the MWIS \LP, and for the case
when there is no integrality gap.
  
\begin{lemma}\label{lem:cs}
  When there is no integrality gap between \IP~ and \LP,~there
  exists a pair of optimal solutions $\bx=(x_i)$, $\lambda =
  (\lambda_{ij})$ of \LP~and \DL ~respectively, such that: (a) $\bx
  \in \{0,1\}^n$, (b) $x_i \lf(\sum_{j \in \cN(i)} \lambda_{ij} -
  w_i\rf) = 0$ for all $i \in V$, (c) $\lf(x_i + x_j - 1\rf)
  \lambda_{ij} = 0$, for all $(i,j) \in E$.
\end{lemma}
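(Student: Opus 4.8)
The plan is to prove Lemma~\ref{lem:cs}, which is really just an application of the standard complementary slackness theorem of linear programming, specialized to the particular structure of \LP~ and \DL, together with the assumption that there is no integrality gap.

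First I would invoke LP duality. Since \LP~ is a feasible, bounded linear program (the polytope is contained in $[0,1]^n$ and the objective has positive coefficients), strong duality guarantees that \DL~ is also feasible and that the two optima coincide. Pick any optimal primal solution $\bx$ and any optimal dual solution $\lambda$. The general complementary slackness conditions assert that at such an optimal pair, for each primal variable $x_i$ the product of $x_i$ with the slack in its associated dual constraint vanishes, and symmetrically for each dual variable $\lambda_{ij}$ the product of $\lambda_{ij}$ with the slack in its associated primal constraint vanishes. Writing these out for our specific constraint matrices yields exactly conditions (b), namely $x_i\lf(\sum_{j\in\cN(i)}\lambda_{ij}-w_i\rf)=0$, and the condition $\lf(x_i+x_j-1\rf)\lambda_{ij}=0$, which is (c). So (b) and (c) are immediate from the textbook complementary slackness theorem applied to our matrices.

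The only remaining point is condition (a), that we may choose the optimal $\bx$ to be integral, i.e. $\bx\in\{0,1\}^n$. This is where the no-integrality-gap hypothesis enters: by definition, when there is no integrality gap the \LP~ optimum value equals the \IP~ optimum value, so there exists an integral optimal solution $\bx$ to \LP~ (namely an MWIS of $G$). I would take this particular $\bx$ as the primal solution in the complementary slackness pair, and pair it with any optimal dual solution $\lambda$ guaranteed by strong duality. Complementary slackness then holds for this specific pair, delivering (a), (b), and (c) simultaneously.

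I do not anticipate a genuine obstacle here, since the statement is essentially a specialization of a standard theorem; the main thing to be careful about is the logical ordering. One must first secure the existence of an integral primal optimum via the no-gap hypothesis, then separately secure a dual optimum via strong duality, and only then assert that complementary slackness binds \emph{this} chosen pair. It would be a subtle error to apply complementary slackness to an arbitrary primal optimum and then hope it is integral, since fractional optima may also exist; the point is that complementary slackness holds for \emph{every} optimal pair, and in particular for the integral-primal/arbitrary-dual pair we have singled out.
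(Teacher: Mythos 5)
Your proposal is correct and matches the paper's intent: the paper states Lemma~\ref{lem:cs} without proof, treating it as the standard complementary slackness theorem specialized to \LP~ and \DL, which is precisely what you spell out. Your care in pairing the integral primal optimum (guaranteed by the no-gap hypothesis) with an arbitrary dual optimum, noting that complementary slackness binds \emph{every} optimal pair, is exactly the right way to fill in the omitted details.
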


\subsection{Sample Application: Scheduling in Wireless Networks}
  \label{sec:applic} 

We now briefly describe an important application that requires an
efficient, distributed solution to the MWIS problem: transmision
scheduling in wireless networks that lack a centralized
infrastructure, and where nodes can only communicate with local
neighbors (e.g. see \cite{JS07}). Such networks are ubiquitous 
in the modern world: examples range from sensor networks that 
lack wired connections to the fusion center, and ad-hoc networks 
that can be quickly deployed in areas without coverage, to the 
802.11 wi-fi networks that currently represent the most widely 
used method for wireless data access.

Fundamentally, any two wireless nodes that transmit at the same time
and over the same frequencies will interfere with each other, if they
are located close by. Interference means that the intended receivers
will not be able to decode the transmissions. Typically in a network
only certain pairs of nodes interfere. The scheduling problem is to
decide which nodes should transmit at a given time over a given
frequency, so that {\em (a)} there is no interference, and {\em (b)}
nodes which have a large amount of data to send are given priority. In
particular, it is well known that if each node is given a {\em weight}
equal to the data it has to transmit, optimal network operation
demands scheduling the set of nodes with highest total weight. If a ``
conflict graph'' is made, with an edge between every pair of
interfering nodes, the scheduling problem is {\em exactly} the problem
of finding the MWIS of the conflict graph. The lack of an
infrastructure, the fact that nodes often have limited capabilities,
and the local nature of communication, all necessitate a lightweight
distributed algorithm for solving the MWIS problem.

\section{Max-product for MWIS}\label{sec:max_prod_for_mwis}

The classical max-product algorithm is a heuristic that can be used to
find the MAP assignment of a probability distribution. Now, given an
MWIS problem on $G=(V,E)$, associate a binary random variable $X_i$
with each $i \in V$ and consider the following joint distribution: for
$\bx \in \{0,1\}^n$,
\begin{eqnarray}
p\left(\bx\right ) & = & \frac{1}{Z} \prod_{(i,j) \in E}
\mathbf{1}_{\{x_i + x_j \leq 1\}} \prod_{i \in V} \exp(w_i x_i),
\label{e1}
\end{eqnarray}
where $Z$ is the normalization constant. In the above, $\mathbf{1}$ is
the standard indicator function: $\bone_{\text{true}} = 1$ and
$\bone_{\text{false}}=0$. It is easy to see that $p(\bx) =
\frac{1}{Z}\exp\lf(\sum_i w_i x_i\rf)$ if $\bx$ is an independent set,
and $p(\bx) = 0$ otherwise.  Thus, any MAP estimate $\arg\max_{\bx}
p(\bx)$ corresponds to a maximum weight independent set of $G$.

The update equations for max-product can be derived in a standard and
straightforward fashion from the probability distribution. We now
describe the max-product algorithm as derived from $p$.  At every
iteration $t$ each node $i$ sends a {\em message} $\{m^{t}_{i\to
j}(0),m^{t}_{i\to j}(1) \}$ to each neighbor $j\in \cN(i)$.  Each node
also maintains a {\em belief} $\{b_i^t(0),b_i^t(1)\}$ vector. The
message and belief updates, as well as the final output, are computed
as follows.

\vspace{.05in}
\noindent{\bf Max-product for MWIS}
\vspace{.05in}
\hrule
\vspace{.1in}
\begin{itemize}
\item[(o)] Initially, $m^{0}_{i\to j}(0) = m^{0}_{j\to i}(1) = 1$ for
  all $(i,j) \in E$.
\item[(i)] The messages are updated as follows:
  \begin{eqnarray*}
    m_{i\rightarrow j}^{t+1}(0) & = & \max \left \{ \prod_{k\neq j, k
      \in \cN(i)} m^t_{k \rightarrow i} (0) \, , ~~ \right. \\
      & ~ & \qquad \qquad \left. \, e^{w_i}
      \prod_{k\neq j, k\in \cN(i)} m^t_{k \rightarrow i} (1)\right \},
      \\ m_{i\rightarrow j}^{t+1}(1) & = & \prod_{k\neq j, k\in
      \cN(i)} m^t_{k\rightarrow i} (0).
  \end{eqnarray*}
  
\item[(ii)] Nodes $i \in V$, compute their beliefs as follows: 
\begin{eqnarray*}
b^t_i(0) & = & \prod_{k \in \cN(i)} m^t_{k \rightarrow i} (0), \\
b^t_i(1) & = & e^{w_i} \prod_{k\in \cN(i)} m^t_{k \rightarrow i} (1). 
\end{eqnarray*}

\item[(iii)] Estimate max. wt. independent set $\bx(b^{t+1})$ as
follows:
\begin{eqnarray*}
x_i(b_i^t) = 1 & \text{if} & b^t_i(1) > b^t_i(0) \\
x_i(b_i^t) = 0 & &  b^t_i(1) < b^t_i(0) \\
x_i(b_i^t) = ? & &  b^t_i(1) = b^t_i(0) 
\end{eqnarray*}

\item[(iv)] Update $t = t + 1$; repeat from (i) till $\bx(b^{t})$
converges and output the converged estimate.

\end{itemize}
\vspace{.1in}
\hrule
\vspace{.1in} 

For the purpose of analysis, we find it convenient to transform the
messages and their dynamics as follows. First, define
$$\gamma^t_{i\rightarrow j} = \log \left (
\frac{m^t_{i\rightarrow j}(0)}{m^t_{i\rightarrow j}(1)} \right
).$$ 
Here, since the algorithm starts with all messages being strictly 
positive, the messages will remain strictly positive over any 
finite number of iterations. Therefore, taking logarithm is a 
valid operation. With this new definition, step (i) of the max-product 
becomes 
\begin{eqnarray}
\gamma^{t+1}_{i\rightarrow j} = \left ( w_i - \sum_{k \in
\cN(i)-j}\gamma^t_{k\rightarrow i} \right )_+, \label{eq:min-sum}
\end{eqnarray} 
where we use the notation $(x)_+ = \max\{x,0\}$. The final estimation
step (iii) of max-product takes the following form:
\begin{eqnarray}
x_i(\gamma^t) =1 & \text{if} & w_i > \sum_{k\in \cN(i)}
  \gamma^t_{k\rightarrow i} \label{eq:est_1_def} \\ x_i(\gamma^t) =0 &
  & w_i < \sum_{k\in \cN(i)} \gamma^t_{k\rightarrow i}
  \label{eq:est_0_def} \\
  x_i(\gamma^t) =? & & w_i = \sum_{k\in \cN(i)} \gamma^t_{k\rightarrow
  i} \label{eq:est_?_def}
\end{eqnarray} 
This modification of max-product is often known as the ``min-sum''
algorithm, and is just a reformulation of the max-product.  In the
rest of the paper we refer to this as simply the max-product
algorithm.

\section{Fixed Points of Max-product} \label{sec:fixed_p}

When applied to general graphs, max product may either {\em (a)} not
converge, {\em (b)} converge, and yield the correct answer, or {\em
(c)} converge but yield an incorrect answer. Characterizing when each
of the three situations can occur is a challenging and important
task. One approach to this task has been to look directly at the fixed
points, if any, of the iterative procedure (see e.g. \cite{YFW00}). In
this section we investigate properties of fixed points, by formally
establishing a connection to the \LP~ polytope.

Note that a set of messages $\gamma^*$ is a fixed point of max-product
if, for all $(i,j)\in E$
\begin{equation}
\gamma^*_{i\rightarrow j} = \left ( w_i - \sum_{k \in \cN(i)-
  j}\gamma^*_{k\rightarrow i} \right )_+ \label{eq:fixedp}
\end{equation}
The following lemma establishes that fixed points always exist.

\begin{lemma}\label{lem:fixedpex}
There exists at least one fixed point $\gamma^*$ such that
$\gamma^*_{i\to j} \in [0,w_i]$ for each $(i,j)\in E$
\end{lemma}
\begin{proof}
Let $w^* = \max_i w_i$, and suppose at time $t$ each $\gamma_{i\to
j}^t \in [0,w^*]$. From (\ref{eq:min-sum}) it is clear that this will
result in the messages $\gamma^{t+1}$ at the next time also having
each $\gamma_{i\rightarrow j}^{t+1} \in [0,w^*]$. Thus, the
max-product update rule (\ref{eq:min-sum}) maps a message vector
$\gamma^t \in [0,w^*]^{2|E|}$ into another vector in
$[0,w^*]^{2|E|}$. Also, it is easy to see that (\ref{eq:min-sum}) is a
continuous function. Therefore, by Brouwer's fixed point theorem there
exists a fixed point $\gamma^* \in [0,w^*]^{2|E|}$.
\end{proof}

%\subsection*{Properties}\label{sec:fp}

We now study properties of the fixed points in order to understand the
correctness of the estimate output by max-product. The following
theorem characterizes the structure of estimates at
fixed-points. Recall that the estimate $x_i(\gamma^*)$ for node $i$
can be 0,1 or ?.

\begin{theorem}\label{thm:fp}
Let $\gamma^*$ be a fixed point, and let $\bx(\gamma^*) =
(x_i(\gamma^*))$ be the corresponding estimate. Then,
\begin{enumerate}
\item If $x_i(\gamma^*) = 1$ then every neighbor $j\in \cN(i)$ has
  $x_j(\gamma^*) = 0$.
\item If $x_i(\gamma^*) = 0$ then at least one neighbor $j\in \cN(i)$
  has $x_j(\gamma^*) = 1$.
\item If $x_i(\gamma^*) = ?$ then at least one neighbor $j\in \cN(i)$
  has $x_j(\gamma^*) = ?$.
\end{enumerate}
\end{theorem}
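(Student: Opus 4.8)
The plan is to analyze the fixed-point equation \eqref{eq:fixedp} directly, translating each of the three estimate outcomes (1, 0, ?) into statements about the quantities $w_i - \sum_{k\in\cN(i)}\gamma^*_{k\to i}$ and the messages $\gamma^*_{i\to j}$. The central observation I want to exploit is the relationship between an outgoing message $\gamma^*_{i\to j}$ and the estimate $x_i(\gamma^*)$. Note from \eqref{eq:fixedp} that $\gamma^*_{i\to j} = (w_i - \sum_{k\in\cN(i)-j}\gamma^*_{k\to i})_+$, whereas the estimate in \eqref{eq:est_1_def}--\eqref{eq:est_?_def} is governed by the sign of $w_i - \sum_{k\in\cN(i)}\gamma^*_{k\to i}$, which includes the term $\gamma^*_{j\to i}$ that the message to $j$ omits. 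So the key preliminary step is to understand, for each node $i$, when its outgoing messages are strictly positive versus zero, and to connect this to its estimate.

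\textbf{First} I would establish part (1). Suppose $x_i(\gamma^*)=1$, so by \eqref{eq:est_1_def} we have $w_i > \sum_{k\in\cN(i)}\gamma^*_{k\to i} \geq \sum_{k\in\cN(i)-j}\gamma^*_{k\to i}$ for any fixed $j\in\cN(i)$. Hence $\gamma^*_{i\to j} = w_i - \sum_{k\in\cN(i)-j}\gamma^*_{k\to i} > 0$, i.e. $i$ sends a strictly positive message to every neighbor. Now fix $j\in\cN(i)$; I must show $x_j(\gamma^*)=0$, i.e. $w_j < \sum_{k\in\cN(j)}\gamma^*_{k\to j}$. Since $i\in\cN(j)$, the sum $\sum_{k\in\cN(j)}\gamma^*_{k\to j}$ contains the strictly positive term $\gamma^*_{i\to j}$, and I would argue that this term alone, combined with the exact value $\gamma^*_{i\to j} = w_i - \sum_{k\in\cN(i)-j}\gamma^*_{k\to i}$, forces the strict inequality. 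This will require bounding the remaining contributions; the natural route is to show $\gamma^*_{j\to i} = 0$ (since $i$ already ``claims'' the edge), which fixes part of the accounting on node $j$'s side and lets me compare sums cleanly.

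\textbf{For part (2),} suppose $x_i(\gamma^*)=0$, so $w_i < \sum_{k\in\cN(i)}\gamma^*_{k\to i}$; I must exhibit a neighbor $j$ with $x_j(\gamma^*)=1$. The strict inequality means $\sum_{k\in\cN(i)}\gamma^*_{k\to i}>0$, so at least one incoming message $\gamma^*_{j\to i}>0$ is strictly positive. By the fixed-point equation this positive message equals $w_j - \sum_{k\in\cN(j)-i}\gamma^*_{k\to j}$, and I would argue that a strictly positive outgoing message from $j$, together with the dropped term $\gamma^*_{i\to j}$, forces $w_j > \sum_{k\in\cN(j)}\gamma^*_{k\to j}$ and hence $x_j(\gamma^*)=1$ — again mediated by showing $\gamma^*_{i\to j}=0$. \textbf{Part (3)} should follow from the contrapositive structure of (1) and (2): if $x_i=?$ then $w_i = \sum_{k\in\cN(i)}\gamma^*_{k\to i}$, and I would show that if every neighbor $j$ had $x_j\in\{0,1\}$ the equations from parts (1) and (2) would force either a strict ``0'' or strict ``1'' outcome at $i$, contradicting the equality.

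\textbf{The main obstacle} I anticipate is the bookkeeping around the asymmetry between outgoing messages (which omit one incoming term) and the estimate (which uses all incoming terms). The clean resolution hinges on the recurring claim that whenever $i$ sends a strictly positive message to $j$, the reverse message $\gamma^*_{j\to i}$ must be zero, so that edges behave like directed ``claims.'' Proving this claim rigorously — essentially ruling out two neighbors both sending each other positive messages at a fixed point — is the crux, and I would prove it as a standalone lemma by summing the two fixed-point equations across the edge $(i,j)$ and deriving a contradiction with half-integrality or with positivity of the weights, then invoke it uniformly across all three parts.
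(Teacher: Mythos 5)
Your overall architecture (translating each estimate outcome into facts about $w_i - \sum_{k\in\cN(i)}\gamma^*_{k\to i}$ and the per-edge messages) is the right one and matches the paper's, but the standalone lemma your whole plan hinges on is false. You claim that at a fixed point, $\gamma^*_{i\to j}>0$ forces $\gamma^*_{j\to i}=0$ (``edges behave like directed claims''). Counterexample: let $G$ be a single edge $(i,j)$ with weights $w_i > w_j > 0$. Then $\cN(i)\setminus\{j\} = \emptyset$, so (\ref{eq:fixedp}) gives the fixed point $\gamma^*_{i\to j} = w_i > 0$ and $\gamma^*_{j\to i} = w_j > 0$: both directions are strictly positive, yet the estimates $x_i(\gamma^*)=1$, $x_j(\gamma^*)=0$ are perfectly consistent with the theorem. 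The same occurs at leaf edges of any graph, so no argument (``summing the two fixed-point equations,'' or appealing to half-integrality -- which is a property of the \LP~polytope, not of messages) can rescue the lemma. Consequently the mediating step you invoke in all three parts collapses: in part (1) it is not true that $\gamma^*_{j\to i}=0$, and in part (2) it is not true that $\gamma^*_{i\to j}=0$.

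What \emph{is} true, and what the paper proves, concerns the relative \emph{order} of the two messages on an edge, not one of them vanishing. The key accounting identity: if $\gamma^*_{j\to i}>0$, then (\ref{eq:fixedp}) gives $\gamma^*_{j\to i} = w_j - \sum_{k\in\cN(j)-i}\gamma^*_{k\to j}$, hence
\[
w_j - \sum_{k\in\cN(j)}\gamma^*_{k\to j} ~=~ \gamma^*_{j\to i} - \gamma^*_{i\to j},
\]
so the estimate at $j$ is governed by the \emph{difference} of the two messages across the edge. From this one gets the correct edge relations: $x_i(\gamma^*)=1 \Rightarrow \gamma^*_{i\to j} > \gamma^*_{j\to i}$ for all $j\in\cN(i)$; $x_i(\gamma^*)=\,? \Rightarrow \gamma^*_{i\to j} = \gamma^*_{j\to i}$; and both messages are zero only when \emph{both} endpoints have estimate $0$. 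Part (1) then follows by trichotomy -- a neighbor with estimate $1$ or $?$ would reverse or equalize the order, a contradiction -- which is cleaner than your plan of directly verifying $w_j < \sum_{k\in\cN(j)}\gamma^*_{k\to j}$ (that would need lower bounds on the \emph{other} incoming messages at $j$ that you do not have). Parts (2) and (3) use positivity of $w_i$ to extract a strictly positive incoming message $\gamma^*_{j\to i}$ and then apply the identity above. Note also that your contrapositive plan for part (3) does not close as stated: applying part (2) to a neighbor $j$ with $x_j(\gamma^*)=0$ yields some neighbor of $j$ with estimate $1$, but that neighbor need not be $i$, so no contradiction arises at $i$; the paper instead shows directly that a positive message $\gamma^*_{j\to i}$ together with $\gamma^*_{i\to j}=\gamma^*_{j\to i}$ forces $w_j = \sum_{l\in\cN(j)}\gamma^*_{l\to j}$, i.e.\ $x_j(\gamma^*)=\,?$. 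With the ordering relations substituted for your false lemma, your outline becomes the paper's proof.
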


Before proving Theorem \ref{thm:fp} we discuss its implications.
Recall from Lemma \ref{lem:halfint} that every extreme point of the
\LP~ polytope consists of each node having a value of 0,1 or
$\frac{1}{2}$. If all weights are positive, the optimum of \LP~ will
have the following characteristics: every node with value 1 will be
surrounded by nodes with value 0, every node with value 0 will have at
least one neighbor with value 1, and every node with value
$\frac{1}{2}$ will have one neighbor with value $\frac{1}{2}$. These
properties bear a remarkable similarity to those in Theorem
\ref{thm:fp}. Indeed, given a fixed point $\gamma^*$ and its estimates
$\bx(\gamma^*)$, make a vector $\by$ by setting

\begin{tabular}{ccc}
$y_i = \frac{1}{2}$ & if estimate for $i$ is & $x_i(\gamma^*) = ?$ \\
$y_i = 1$ & & $x_i(\gamma^*) = 1$ \\
$y_i = 0$ & & $x_i(\gamma^*) = 0$
\end{tabular}

Then, Theorem \ref{thm:fp} implies that $\by$ will be an extreme point
of the \LP~ polytope, and also one that maximizes {\em some} weight
function consisting of positive node weights. Note however that this
{\em may not} be the true weights $w_i$. In other words, given any
MWIS problem with graph $G$ and weights $w$, each max-product fixed
point represents the optimum of the LP relaxation of some MWIS problem
on the same graph $G$, but possibly with different weights
$\widehat{w}$.

The fact that max-product estimates optimize a different weight
function means that both eventualities are possible: \LP~ giving the
correct answer but max-product failing, and vice versa. We now provide
simple examples for each one of these situations.

\begin{figure}[htb]
\begin{center}
\epsfig{file=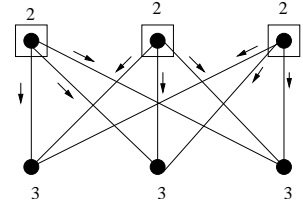,width=1.5in} 
\end{center}
\label{fig:ex1}
\caption{This example shows that max-product fixed point may result
in-correct answer even though LP is tight.}
\end{figure}

\begin{figure}[htb]
\begin{center}
\epsfig{file=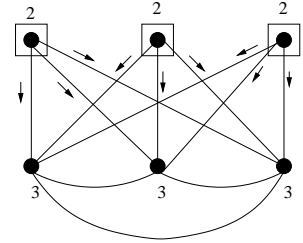,width=1.5in}
\end{center}
\label{fig:ex2}
\caption{This example shows that max-product fixed point can find
right MWIS even though LP relaxation is not tight.}
\end{figure}

The Figures \ref{fig:ex1} and \ref{fig:ex2} present graphs and the
corresponding fixed points of max-product.  In each graph, numbers
represent node weights, and an arrow from $i$ to $j$ represents a
message value of $\gamma^*_{i\rightarrow j} = 2$.  All other messages,
which do not have arrows, have value zero.  The boxed nodes indicate
the ones for which the estimate $x_i(\gamma^*) =1$. It is easy to
verify that both examples represent max-product fixed points.

For the graph in Figure \ref{fig:ex1}, the max-product 
fixed point results in an incorrect estimate. However, the 
graph is bipartite, and hence \LP~ will provide the correct 
answer. For the graph in Figure \ref{fig:ex2}, there is an
integrality gap between \LP~ and \IP:~ setting each 
$x_i = \frac{1}{2}$ yields an optimal value of 7.5 for \LP,~ 
while the optimal solution to \IP~ has value 6. Note that
the estimate at the fixed point of max-product is
the correct MWIS. It is also worth noticing that both of 
these examples, the fixed points lie in the strict 
interiors of a non-trivial region of attraction: starting
the iterative procedure from within these regions will result in
convergence to the corresponding fixed point. 
These examples indicate that it may not be possible to resolve the
question of relative strength of the two procedures based solely on an
analysis of the fixed points of max-product.

The particular fixed point, if any, that max-product converges to
depends on the initialization of the messages; each fixed point will
have its own region of convergence. In Section \ref{sec:iter_anal} we
directly analyze the iterative algorithm when started from the
``natural'' initialization of unbiased messages. As a byproduct of
this analysis, we prove that if max-product from this initialization
converges, then the resulting fixed-point estimate is the optimum of
\LP; thus, in this case the max-product fixed point solves the
``correct'' \LP.

\begin{proof}[Proof of Theorem \ref{thm:fp}]
The proof of Theorem \ref{thm:fp} follows from manipulations of the
fixed point equations (\ref{eq:fixedp}). For ease of notation we
replace $\gamma^*$ by $\gamma$. We first prove the following
statements on how the estimates determine the relative ordering of the
two messages (one in each direction) on any given edge:
\begin{eqnarray}
x_i(\gamma) = 1 & \Rightarrow & \gamma_{i\rightarrow j} >
\gamma_{j\rightarrow i}\quad \forall j\in \cN(i) \label{eq:est_1} \\
x_i(\gamma) = ? & \Rightarrow & \gamma_{i\rightarrow j} =
\gamma_{j\rightarrow i}\quad \forall j\in \cN(i) \label{eq:est_?} 
\end{eqnarray}
The above equations cover every case except for edges between two
nodes with 0 estimates. This is covered by the following
\begin{eqnarray}
x_i(\gamma) = 0 ~\text{and} ~ x_j(\gamma) = 0 & \Rightarrow &
\gamma_{i\rightarrow j} = \gamma_{j\rightarrow i} = 0 \label{eq:est_0}
\end{eqnarray}

Suppose first that $i$ is such that $x_i(\gamma^*) = 1$. By definition
(\ref{eq:fixedp}) of the fixed point,
\[
\gamma_{i\rightarrow j} ~ \geq ~ w_i - \sum_{k\in \cN(i)-j}
\gamma_{k\rightarrow i}
\]
However, by (\ref{eq:est_1_def}), the fact that $x_i(\gamma) = 1$
implies that
\[
w_i - \sum_{k\in \cN(i)-j} \gamma_{k\rightarrow i} ~ > ~
\gamma_{j\rightarrow i}
\]
Putting the above two equations together proves (\ref{eq:est_1}). The
proof of (\ref{eq:est_?}) is along similar lines. Suppose now $i$ is
such that $x_i(\gamma) = ?$. By (\ref{eq:est_?_def}) this implies
that $w_i = \sum_{k\in \cN(i)} \gamma_{k\rightarrow i}$, and so from
(\ref{eq:fixedp}) we have that
\[
\gamma_{i\rightarrow j} ~ = ~ w_i - \sum_{k\in \cN(i)-j}
\gamma_{k\rightarrow i}
\]
Also, the fact that $x_i(\gamma) = ?$ means that
\[
w_i - \sum_{k\in \cN(i)-j} \gamma_{k\rightarrow i} ~ = ~
\gamma_{j\rightarrow i}
\]
Putting the above two equations together proves (\ref{eq:est_?}). We
now prove the three parts of Theorem \ref{thm:fp}.

{\em Proof of Part 1):} Let $i$ have estimate $x_i(\gamma) = 1$, and
suppose there exists a neighbor $j\in \cN(i)$ such that $x_j(\gamma)
= ?$ or 1. Then, from (\ref{eq:est_1}) it follows that
$\gamma_{i\rightarrow j} > \gamma_{j\rightarrow i}$, and from
(\ref{eq:est_?}) it further follows that $\gamma_{i\rightarrow j}
\leq \gamma_{j\rightarrow i}$. However, this is a contradiction, and
thus every neighbor of $i$ has to have estimate 0.

{\em Proof of Part 2):} Let $i$ have estimate $x_i(\gamma) = 0$. Since
$w_i \geq 0$, (\ref{eq:est_0_def}) implies that there exists at least
one neighbor $j\in \cN(i)$ such that the message $\gamma_{j\rightarrow
i} >0$. From (\ref{eq:est_0}), this means that the estimate
$x_j(\gamma)$ cannot be 0. Suppose now that $x_j(\gamma)=?$. From
(\ref{eq:est_1}) it follows that $\gamma_{i\rightarrow j} =
\gamma_{j\rightarrow i} > 0$, and so
\[
\gamma_{i\rightarrow j} ~ = ~ w_i - \sum_{k\in \cN(i)-j}
\gamma_{k\rightarrow i}
\]
However, since $\gamma_{i\rightarrow j} = \gamma_{j\rightarrow i}$,
this means that 
\[
\gamma_{j\rightarrow i} ~ = ~ w_i - \sum_{k\in \cN(i)-j}
\gamma_{k\rightarrow i}
\]
which violates (\ref{eq:est_0_def}), and thus the assumption that
$x_i(\gamma) = 0$. Thus it has to be that $x_i(\gamma) = 1$.

{\em Proof of Part 3):} Let $i$ have estimate $x_i(\gamma) = ?$. Since
$w_i \geq 0$, (\ref{eq:est_?_def}) implies that there exists at least
one neighbor $j\in \cN(i)$ such that the message $\gamma_{j\rightarrow
i} >0$. From (\ref{eq:est_?}) it follows that
\[
\gamma_{i\rightarrow j} = \gamma_{j\rightarrow i} = w_j - \sum_{l \neq
  i} \gamma_{l\rightarrow j}
\]
Thus $w_j = \sum_l \gamma_{l\rightarrow j}$, which by
(\ref{eq:est_?_def}) means that $x_j(\gamma) = ?$. Thus $i$ has at
least one neighbor $j$ with estimate $x_j(\gamma) = ?$.
\end{proof}

\section{Direct Analysis of the Iterative Algorithm}
\label{sec:iter_anal}

In the last section, we saw that fixed points of Max-product may
correspond to optima ``wrong'' linear programs: ones that operate on
the same feasible set as \LP, but optimize a different linear
function. However, there will also be fixed points that correspond to
optimizing the correct function. Max-product is a deterministic
algorithm, and so which of these fixed points (if any) are reached is
determined by the initialization. In this section we directly analyze
the iterative algorithm itself, as started from the ``natural''
initialization $\gamma = 0$, which corresponds to uninformative
messages

We show that the resulting estimates are characterized by optima of
the {\em true} \LP, at every time instant (not just at fixed
points). This implies that, if a fixed point is reached, it will
exactly reflect an optimum of \LP. Our main theorem in this section is
stated below.

\begin{theorem}\label{thm:lp_mp}
Given any MWIS problem on weighted graph $G$, suppose max-product is
started from the initial condition $\gamma = 0$. Then, for any node
$i\in G$.
\begin{enumerate}
\item If there exists {\em any} optimum $x^*$ of \LP~ for which the
  mass assigned to $i$ satisfies $x^*_i <1$, then the max-product
  estimate $x_i(\gamma^t)$ is 0 or ? for all {\em even} times
  $t$.
\item If there exists {\em any} optimum $x^*$ of \LP~ for which the
  mass assigned to edge $i$ satisfies $x^*_i >0$, then the max-product
  estimate $x_i(\gamma^t)$ is 1 or ? for all {\em odd} times $t$.
\end{enumerate}
\end{theorem}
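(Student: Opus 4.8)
The plan is to prove both parts at once by sandwiching the messages $\gamma^t$ between quantities read off from a single optimal dual solution. Writing $g_i(t) := w_i - \sum_{k\in\cN(i)}\gamma^t_{k\to i}$, the estimation rules (\ref{eq:est_1_def})--(\ref{eq:est_0_def}) say the estimate at $i$ is $1$, $0$, or $?$ according as $g_i(t)$ is positive, negative, or zero; thus Part~1 amounts to $g_i(t)\le 0$ and Part~2 to $g_i(t)\ge 0$ at the stated times. I would fix the witnessing primal optimum $x^*$ of \LP~ together with any optimal dual solution $\lambda^*$ of \DL. Since both are optimal they satisfy complementary slackness: (i) $x^*_i>0 \Rightarrow \sum_{j\in\cN(i)}\lambda^*_{ij}=w_i$, and (ii) $\lambda^*_{ij}>0 \Rightarrow x^*_i+x^*_j=1$, alongside primal feasibility $x^*_i+x^*_j\le 1$ and dual feasibility $\sum_{j\in\cN(i)}\lambda^*_{ij}\ge w_i$. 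This uses only LP duality, not the absence of an integrality gap, which is essential since the theorem makes no tightness assumption.

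The heart of the argument is a pair of coupled per-edge invariants, proved by induction on $t$ using the update (\ref{eq:min-sum}): \textbf{(A)} for every directed edge $c\to i$ with $x^*_c>0$, $\gamma^t_{c\to i}\ge\lambda^*_{ci}$; and \textbf{(B)} for every directed edge $c\to i$ with $x^*_c<1$, $\gamma^t_{c\to i}\le\lambda^*_{ci}$. The claim is that (A) holds at all odd $t$ and (B) at all even $t$. The base case is $t=0$: since $\gamma^0=0$, (B) holds trivially (as $\lambda^*\ge 0$), whereas (A) need not, and this asymmetry is exactly what pins the two invariants to opposite parities. For the step into (A) at an edge $c\to i$: $x^*_c>0$ forces $x^*_k\le 1-x^*_c<1$ for every neighbor $k$ of $c$ by feasibility, so the inductive (B) applies to all incoming $\gamma^{t-1}_{k\to c}$; combining $\gamma^{t-1}_{k\to c}\le\lambda^*_{ck}$ with the tightness $\sum_k\lambda^*_{ck}=w_c$ from (i) gives $w_c-\sum_{k\ne i}\gamma^{t-1}_{k\to c}\ge\lambda^*_{ci}\ge 0$, which the clip $(\cdot)_+$ preserves. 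For the step into (B): if $x^*_c<1$ then every neighbor $k$ with $\lambda^*_{ck}>0$ has $x^*_k=1-x^*_c>0$ by (ii), so (A) applies to those $\gamma^{t-1}_{k\to c}$ while the rest obey $\gamma^{t-1}_{k\to c}\ge 0=\lambda^*_{ck}$; together with dual feasibility $\sum_k\lambda^*_{ck}\ge w_c$ this gives $w_c-\sum_{k\ne i}\gamma^{t-1}_{k\to c}\le\lambda^*_{ci}$, so the clipped update obeys (B).

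With the invariants in hand both parts fall out at the root. For Part~1, assume $x^*_i<1$. Every neighbor $c$ with $\lambda^*_{ic}>0$ has $x^*_c=1-x^*_i>0$ by (ii), so (A) gives $\gamma^t_{c\to i}\ge\lambda^*_{ic}$ at odd $t$, while neighbors with $\lambda^*_{ic}=0$ satisfy $\gamma^t_{c\to i}\ge 0=\lambda^*_{ic}$ automatically; summing and invoking dual feasibility $\sum_{c}\lambda^*_{ic}\ge w_i$ gives $\sum_c\gamma^t_{c\to i}\ge w_i$, i.e. $g_i(t)\le 0$. For Part~2, assume $x^*_i>0$. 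Then every neighbor $c$ has $x^*_c\le 1-x^*_i<1$, so (B) gives $\gamma^t_{c\to i}\le\lambda^*_{ic}$ at even $t$; summing and using the tightness $\sum_c\lambda^*_{ic}=w_i$ from (i) gives $\sum_c\gamma^t_{c\to i}\le w_i$, i.e. $g_i(t)\ge 0$. This is exactly the claimed dichotomy, with one invariant controlling each parity.

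I expect the main obstacle to be pinning down the invariants and their side-conditions precisely, and then reconciling the parity with the statement. Three points need care. First, the two invariants are genuinely asymmetric: (A) needs the dual tightness (i) at the source, whereas (B) needs only dual feasibility for its own inequality but relies on (ii) to propagate positivity of $x^*$ to the neighbors it recurses on, and verifying that the source-conditions $x^*_c>0$ / $x^*_c<1$ are inherited correctly (from primal feasibility for (A), from (ii) for (B)) is the delicate bookkeeping. Second, since (B) is valid at $t=0$ but (A) is not, the induction forces (A) to odd and (B) to even times; this is the true origin of the even/odd split, so I must ensure the clip $(\cdot)_+$ and the $\lambda^*_{ck}=0$ edges (where one direction is trivial) are handled so the parities do not drift. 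Third, because the initialization $\gamma^0=0$ itself yields all estimates equal to $1$, the parity labelling of the iterates is offset by one from the message-update index; I would reconcile this offset with the paper's convention so that ``odd''/``even'' in (A)/(B) line up with ``even''/``odd'' in Parts~1 and~2 as stated.
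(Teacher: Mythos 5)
Your proposal is correct, but it proves the theorem by a genuinely different route than the paper. The paper argues primally via the computation-tree interpretation of the estimates (Lemma \ref{lem:comp_tree}): assuming a wrong estimate at the root of $T_i(t)$, it builds a maximal alternating subtree $S$ between an MWIS $I$ of the tree and the copies of nodes carrying positive mass under $x^*$, proves $w(S\cap I)\le w(S\cap I^*)$ by perturbing the LP optimum as $x^*+\epsilon(m-n)$ (Lemmas \ref{lem:set_flip_lp_mp} and \ref{lem:feas}), and flips $S$ to contradict the estimate; the parity of $t$ enters only through the leaf levels of the alternating subtree. You instead dispense with computation trees entirely and run an induction directly on the message recursion (\ref{eq:min-sum}), sandwiching $\gamma^t_{c\to i}$ against an optimal dual $\lambda^*_{ci}$, with the even/odd split forced by the base case $\gamma^0=0\le\lambda^*$; I checked the bookkeeping and it closes: primal feasibility correctly propagates $x^*_k<1$ into the step for invariant (A), complementary slackness $\lambda^*_{ck}>0\Rightarrow x^*_c+x^*_k=1$ propagates $x^*_k>0$ into the step for (B), the clip $(\cdot)_+$ is harmless since $\lambda^*_{ci}\ge 0$, and summing at the root with dual feasibility (resp.\ tightness from slackness) yields the two parts. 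Your route is shorter and more self-contained -- it avoids the computation-tree lemma, which the paper cites without proof, uses only standard complementary slackness for an arbitrary optimal primal--dual pair (so, as you note, no tightness of the relaxation is assumed, matching the theorem), and exhibits the dual variables as explicit certificates bounding the messages, which nicely anticipates the dual coordinate-descent view developed in Section \ref{sec:algo}; the paper's argument, by contrast, needs no dual solution at all and stays combinatorial on the tree, in the style of the max-product analyses for matching that it cites. On the parity offset you flag at the end: you are right that it is real and must be reconciled, not a flaw in your argument -- under raw iterate indexing ($x_i(\gamma^t)$ with $\gamma^0=0$) all estimates at $t=0$ equal $1$, so Part 1 would fail at the even time $0$; the theorem's times correspond to computation trees of depth $t$ (the all-ones estimate occurring at time $1$), i.e.\ theorem time equals your message index plus one, which maps your odd-index invariant (A) onto the even times of Part 1 and your even-index invariant (B) onto the odd times of Part 2, exactly as you propose. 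The paper's own proof is loose on the same point (when negating the claim it even interchanges the labels of Parts 1 and 2, pairing $x^*_i>0$ with odd $t$), so making this indexing convention explicit is the only addition your write-up needs.
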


From the above theorem, it is easy to see what will happen if \LP~ has
non-integral optima. Suppose node $i$ is assigned non-integral mass at
{\em some} \LP~ optimum $x^*$. This implies that $i$ and $x^*$ will
satisfy both parts of the above theorem. The estimate at node $i$ will
thus either keep varying every alternate time slot, or will converge
to ?. Either way, max-product will fail to provide a useful estimate
for node $i$.

Theorem \ref{thm:lp_mp} also reveals further insights into the
max-product estimates. Suppose for example the estimates converge to
informative answers for a subset of the nodes. Theorem \ref{thm:lp_mp}
implies that every \LP~ optimum assigns the same integral mass to any
fixed node in this subset, and that the converged estimate is the same
as this mass.

The proof of this theorem relies on the computation tree
interpretation of max-product estimates. We now specify this
interpretation for our problem, and then prove Theorem
\ref{thm:lp_mp}.

\subsection*{Computation Tree for MWIS}

The proof of Theorem \ref{thm:lp_mp} relies on the computation tree
interpretation \cite{bib:weiss_local_opt,bib:tatik_jordan} of the
loopy max-product estimates. In this section we briefly outline this
interpretation.  For any node $i$, the {\em computation tree} at time
$t$, denoted by $T_i(t)$, is defined recursively as follows: $T_i(1)$
is just the node $i$. This is the {\em root} of the tree, and in this
case is also its only leaf. The tree $T_i(t)$ at time $t$ is generated
from $T_i(t-1)$ by adding to each leaf of $T_i(t-1)$ a copy of each of
its neighbors in $G$, {\em except for the one neighbor that is already
present in $T_i(t-1)$}. Each node in $T_i$ is a copy of a node in $G$,
and the weights of the nodes in $T_i$ are the same as the
corresponding nodes in $G$. The computation tree interpretation is
stated in the following lemma.

\begin{lemma}\label{lem:comp_tree}
For any node $i$ at time $t$,
\begin{itemize}
\item $x_i(\gamma^t) = 1$ if and only if the root of $T_i(t)$ is a
  member of {\em every} MWIS on $T_i(t)$.
\item $x_i(\gamma^t) = 0$ if and only if the root of $T_i(t)$ is not a
  member of {\em any} MWIS on $T_i(t)$.
\item $x_i(\gamma^t) = ?$ else.
\end{itemize}
\end{lemma}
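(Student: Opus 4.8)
The plan is to reduce the loopy estimate $x_i(\gamma^t)$ to an exact MAP computation on the finite tree $T_i(t)$. The argument has three stages: (i) show that the messages produced by the loopy update (\ref{eq:min-sum}) after $t$ iterations are exactly the messages that plain max-product produces when run on $T_i(t)$; (ii) invoke the exactness of max-product (equivalently, dynamic programming) on a tree to conclude that the root belief $(b_i^t(0),b_i^t(1))$ is, up to a common multiplicative constant, the best independent-set weight on $T_i(t)$ with the root excluded, respectively included; and (iii) translate these two max-marginals into the three membership statements.

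For stage (i) I would set up a correspondence between a directed message $\gamma^t_{k\to i}$ and a rooted subtree of the computation tree. Let $T_{k\to i}(t)$ denote the subtree hanging below the copy of $k$ that sits as a child of the root in $T_i(t+1)$, i.e. the tree one grows from $k$ for $t$ levels while forbidding the edge back to $i$. I claim, by induction on $t$, that $\gamma^t_{k\to i} = \big(W^{1}_{k\to i}(t) - W^{0}_{k\to i}(t)\big)_+$, where $W^{1}$ and $W^0$ are the maximum weights of independent sets on $T_{k\to i}(t)$ that do, respectively do not, contain the subtree's root. The base case is immediate, and the inductive step is exactly the recursion (\ref{eq:min-sum}): writing $A_k$ and $B_k$ for the best sub-independent-set weights including/excluding $k$, one has $A_k = w_k + \sum_{l\in\cN(k)-i} B_l$ and $B_k = \sum_{l\in\cN(k)-i}\max(A_l,B_l) = \sum_l B_l + \sum_l \gamma^{t-1}_{l\to k}$, so $A_k-B_k = w_k - \sum_{l\in\cN(k)-i}\gamma^{t-1}_{l\to k}$, and the $(\,\cdot\,)_+$ records the optimal ``take $k$ only if it helps'' decision at the internal node. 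The crucial structural point that makes the recursion close is the tree-construction rule ``add every neighbor except the one already present'': this is precisely the exclusion $\cN(i)-j$ in the update, so the loopy and tree recursions coincide term by term.

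Given stage (i), stage (ii) is the standard fact that on a tree the max-product beliefs are the true max-marginals of the distribution (\ref{e1}) restricted to $T_i(t)$; since that distribution puts weight $\exp(\sum_j w_j x_j)$ on independent sets and $0$ elsewhere, $\log b_i^t(1) = w_i + \sum_{k\in\cN(i)}\log m^t_{k\to i}(1)$ equals the heaviest independent set on $T_i(t)$ that contains the root, while $\log b_i^t(0)$ equals the heaviest one that omits it. Stage (iii) is then a direct comparison. Using $x_i(\gamma^t)=1 \iff w_i > \sum_{k\in\cN(i)}\gamma^t_{k\to i} \iff b_i^t(1)>b_i^t(0)$, the strict inequality $b_i^t(1)>b_i^t(0)$ says the best independent set containing the root is strictly heavier than any omitting it, equivalently the root lies in every MWIS of $T_i(t)$; symmetrically $b_i^t(1)<b_i^t(0)$ says the root lies in no MWIS, and equality says optimal sets of both kinds exist, i.e. the $?$ case.

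The main obstacle I expect is the bookkeeping in stage (i): pinning down the depth indexing so that $\gamma^t$ matches $T_i(t)$ exactly as stated, and verifying that the $(\,\cdot\,)_+$ together with the exclusion $\cN(k)-i$ really implements ``include $k$ iff it helps, respecting that including $k$ forbids its children'' at every internal node. Once that correspondence is fixed by the induction, stages (ii) and (iii) are routine; the only remaining subtlety is to confirm that the strict-versus-equal dichotomy of the two beliefs maps exactly onto the ``every MWIS / no MWIS / some but not all'' trichotomy, which holds because $b_i^t(1)$ and $b_i^t(0)$ are the only two quantities being compared.
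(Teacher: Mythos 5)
Your proof is correct, but note that the paper itself never proves Lemma~\ref{lem:comp_tree}: it states the computation-tree interpretation as known, citing Weiss--Freeman and Tatikonda--Jordan, and ``briefly outline[s]'' it. What you have written is, in effect, a self-contained reconstruction of the standard unrolling argument behind those citations, and it goes through. Your stage (i) is the heart of it, and the key identity is right: with $A_k,B_k$ the optimal independent-set weights on the message subtree $T_{k\to i}$ including/excluding its root, one has $\max(A_l,B_l)=B_l+(A_l-B_l)_+$, so $A_k-B_k=w_k-\sum_{l\in\cN(k)-i}(A_l-B_l)_+$, which closes the induction $\gamma^t_{k\to i}=(A_k-B_k)_+$ precisely because the tree-construction rule ``add all neighbors except the parent'' mirrors the exclusion $\cN(i)-j$ in (\ref{eq:min-sum}). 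Given stage (i), your stage (ii) needs no appeal to the general tree-exactness of max-product: the root comparison $w_i-\sum_k\gamma^t_{k\to i} = \bigl(w_i+\sum_k B_k\bigr)-\sum_k\max(A_k,B_k)$ is immediate arithmetic, and the trichotomy in stage (iii) is exactly the ``every MWIS / no MWIS / some but not all'' statement. The one wrinkle you flagged is real: under the paper's conventions ($T_i(1)$ is the single node $i$, initialization $\gamma^0=0$), the estimate $x_i(\gamma^t)$ matches the tree $T_i(t+1)$, not $T_i(t)$ --- check $t=1$, where $x_i(\gamma^1)=1$ iff $w_i>\sum_{k\in\cN(i)}w_k$, which is the MWIS condition on the depth-two star, while $T_i(1)$ would make the root trivially a member of every MWIS. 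So the lemma as printed carries an off-by-one in its indexing (harmless for the uses in Theorem~\ref{thm:lp_mp}), and your bookkeeping concern resolves by shifting the tree index, not by any change to your induction. In short: your route supplies the proof the paper outsources, and it is the right one.
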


Thus the max-product estimates correspond to max-weight independent
sets on the computation trees $T_i(t)$, as opposed to on the original
graph $G$. 

{\bf Example:} Consider figure \ref{fig:mwis_comp_tree}. On the left
is the original loopy graph $G$. On the right is $T_a(4)$, the
computation tree for node $a$ at time 4.

\begin{figure}[h]
\centering{\epsfig{file=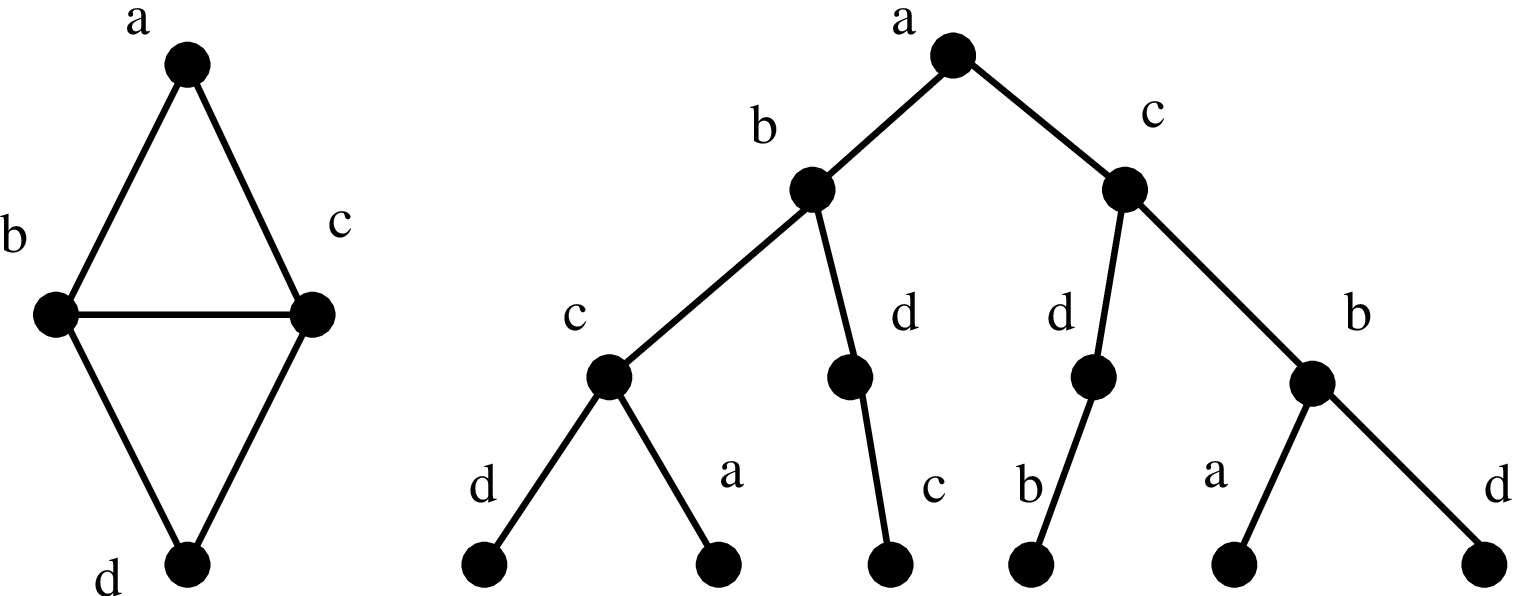,height = 1.0in}}
\label{fig:mwis_comp_tree}
\end{figure}

\subsection*{Proof of Theorem \ref{thm:lp_mp}}

We now prove Theorem \ref{thm:lp_mp}. For brevity, in this proof we
will use the notation $\hat{x}^t_i = x_i(\gamma^t)$ for the estimates.
Suppose now that part 1 of the theorem is not true, i.e. there exists
node $i$, an optimum $x^*$ of \LP~ with $x^*_i>0$, and an odd time $t$
at which the estimate is $\hat{x}^t_i = 0$. Let $T_i(t)$ be the
corresponding computation tree. Using Lemma \ref{lem:comp_tree} this
means that the root $i$ is {\em not} a member of any MWIS of
$T_i(t)$. Let $I$ be some MWIS on $T_i(t)$. We now define the
following set of nodes
\begin{equation*}
I^* = \left \{ j\in T_i(t) \,:\,j\notin I, ~\text{and copy of $j$ in
  $G$ has $x^*_{j}>0$}  \right \} \label{eq:istar}
\end{equation*}
In words, $I^*$ is the set of nodes in $T_i(t)$ which are not in
$I$, and whose copies in $G$ are assigned strictly positive mass by
the LP optimum $x^*$.

Note that by assumption the root $i\in I^*$ and $i\notin I$. Now, from
the root, recursively build a {\em maximal alternating subtree} $S$ as
follows: first add root $i$, which is in $I^*-I$. Then add all
neighbors of $i$ that are in $I-I^*$. Then add all {\em their}
neighbors in $I^*-I$, and so on. The building of $S$ stops either when
it hits the bottom level of the tree, or when no more nodes can be
added while still maintaining the alternating structure.  Note the
following properties of $S$:
\begin{itemize}
\item $S$ is the disjoint union of $(S\cap I)$ and $(S\cap I^*)$.
\item For every $j\in S\cap I$, all its neighbors in $I^*$ are
  included in $S\cap I^*$. Similarly for every $j\in S\cap I^*$, all
  its neighbors in $I$ are included in $S\cap I$.
\item Any edge $(j,k)$ in $T_i(t)$ has at most one endpoint in $(S\cap
  I)$, and at most one in $(S\cap I^*)$.
\end{itemize}
We now state a lemma, which we will prove later. The proof uses the
fact that $t$ is odd.

\begin{lemma}\label{lem:set_flip_lp_mp}
  The weights satisfy $w(S\cap I) \leq w(S\cap I^*)$.
\end{lemma}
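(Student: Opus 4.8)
The plan is to prove the inequality from the optimality of the \LP~ solution $x^*$ on $G$, by perturbing $x^*$ along the projection of the alternating subtree $S$ back down to $G$. Let $\pi : T_i(t)\to G$ be the natural covering map sending each tree node to the node of $G$ of which it is a copy. For each $v\in V$ define the net count $c_v = a_v - b_v$, where $a_v$ is the number of nodes of $S\cap I$ and $b_v$ the number of nodes of $S\cap I^*$ whose $\pi$-image is $v$. For small $\beps>0$ consider the perturbed vector $x'$ with $x'_v = x^*_v + \beps\, c_v$. Since tree nodes inherit the weight of their $G$-copies, $\sum_v w_v x'_v - \sum_v w_v x^*_v = \beps\,(w(S\cap I) - w(S\cap I^*))$. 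Hence, once I show that $x'$ is feasible for \LP~ for all sufficiently small $\beps>0$, the optimality of $x^*$ forces this difference to be $\le 0$, which is exactly the claimed inequality.

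Feasibility splits into the nonnegativity constraints and the edge constraints. Nonnegativity is where the definition of $I^*$ is used: if $b_v>0$ then $v$ is the copy of some node of $I^*$, so by definition $x^*_v>0$, and a small enough $\beps$ keeps $x'_v\ge 0$ even when $c_v<0$; if $b_v=0$ then $c_v=a_v\ge 0$ and $x'_v\ge x^*_v\ge 0$. This part is clean and needs nothing about $S$ beyond its definition.

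The edge constraints are the main obstacle. For a $G$-edge $(u,v)$ with slack, $x^*_u+x^*_v<1$, any small perturbation is harmless. The difficulty lies in the tight edges, $x^*_u+x^*_v=1$, for which I must establish the purely combinatorial inequality $c_u+c_v\le 0$, i.e. $a_u+a_v\le b_u+b_v$. Here lumping several tree nodes onto a single $G$-node couples the two endpoints through otherwise unrelated parts of $S$, so the bound cannot be read off edge-by-edge on $G$ directly; it must be extracted from the structure of $S$. The three stated properties of $S$ are the tools: the alternating property (levels cycle through $I^*,I,I^*,\dots$ with the root in $I^*$), together with the fact that every $I^*$-neighbor of an $S\cap I$ node, and every $I$-neighbor of an $S\cap I^*$ node, is again in $S$, guarantee that each $I$-node of $S$ lying over $u$ or $v$ is balanced by $I^*$-nodes of $S$ over the endpoints. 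When $x^*_v=0$ the node $v$ is not $I^*$-eligible, so $b_v=0$, and one checks $a_v=0$ as well on tight edges; the remaining case $x^*_u,x^*_v\in(0,1)$ is where the matching of $I$-nodes to their $I^*$-neighbors inside $S$ does the work. The oddness of $t$ enters precisely here: since the tree has depth $t$ and the root of $S$ lies in $I^*$, odd $t$ makes the deepest level of $S$ an $I^*$-level, so no $I$-node of $S$ is a ``dangling'' leaf missing the $I^*$-neighbors that compensate it in the projection. Were $t$ even, a bottom layer of uncompensated $I$-nodes could violate $c_u+c_v\le 0$ on a tight edge and push $x'$ out of the polytope.

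With feasibility of $x'$ established, LP-optimality of $x^*$ gives $\beps\,(w(S\cap I)-w(S\cap I^*))\le 0$ for all small $\beps>0$, hence $w(S\cap I)\le w(S\cap I^*)$, as claimed. (For context: the reverse inequality $w(S\cap I)\ge w(S\cap I^*)$ follows from the optimality of $I$ as an MWIS on the tree, since $(I\setminus S)\cup(S\cap I^*)$ is independent by the third listed property of $S$; together these pin down equality and ultimately contradict the assumption that the root lies in no MWIS of $T_i(t)$.)
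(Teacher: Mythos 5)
Your strategy is the same as the paper's: perturb the \LP~optimum by the signed multiplicity vector of the projection of $S$ (the paper's $x = x^* + \epsilon(m-n)$, with your $a_v, b_v$ playing the roles of $m_v, n_v$), verify feasibility, and invoke optimality of $x^*$; your handling of nonnegativity and of slack edges matches the paper's Lemma \ref{lem:feas}. But at the one step carrying the real content --- the tight-edge inequality $a_u + a_v \le b_u + b_v$ --- you assert that the properties of $S$ ``guarantee'' the balancing and that the matching ``does the work,'' without constructing the matching. The paper's actual chain is: every tree node $j \in S\cap I$ has its parent in $S\cap I^*$, so the parent's $G$-node carries positive mass, whence $x^*_{j} < 1$ for the $G$-copy of every node of $S\cap I$; oddness of $t$ places $S\cap I$ on even levels while leaves sit at odd depth $t$, so such $j$ is never a leaf; hence on a tight edge $(u,v)$, each $v$-copy in $S\cap I$ either has a $u$-copy as its parent (already in $S\cap I^*$), or has a $u$-copy child which, since $x^*_u = 1 - x^*_v > 0$ and it cannot lie in $I$, lies in $I^*$ and therefore in $S\cap I^*$ by the closure property of $S$. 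Injectivity of this pairing (each $u$-copy in the tree has at most one $v$-copy neighbor, since its children are copies of distinct neighbors excluding the parent's node) is what lets the counts add up to $a_u + a_v \le b_u + b_v$. The pivotal intermediate fact $x^*_j < 1$ on $S\cap I$ --- without which the compensating neighbor is not even $I^*$-eligible --- appears nowhere in your write-up, and it is exactly where the definition of $I^*$, the closure properties of $S$, and the oddness of $t$ must be combined.

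Moreover, your case split contains a false claim: on a tight edge with $x^*_v = 0$ (so $x^*_u = 1$) you say ``one checks $a_v = 0$.'' That does not follow. A copy of $v$ enters $S\cap I$ whenever it lies in $I$ and its parent lies in $S\cap I^*$; the parent need only be a copy of \emph{some} neighbor $p$ of $v$ with $x^*_p > 0$, and $p$ need not be $u$, so $a_v > 0$ is entirely possible even when $x^*_v = 0$. What is true in this case --- and what the uniform pairing argument delivers --- is $a_v \le b_u$, together with $a_u = 0$ (from $x^*_u = 1$ and the observation $x^*_j < 1$ on $S\cap I$). The paper runs the pairing argument uniformly over all tight edges with no case distinction on the masses; your case analysis should be discarded in favor of it.
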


We now use this lemma to prove the theorem. Consider the set $I'$
which changes $I$ by flipping $S$:
\[
I' = I - (S\cap I) + (S\cap I^*)
\]
We first show that $I'$ is also an independent set on $T_i(t)$. This
means that we need to show that every edge $(j,k)$ in $T_i(t)$ touches
at most one node in $I'$. There are thus three possible scenarios for
edge $(j,k)$:
\begin{itemize}
\item $j,k \notin S$. In this case, membership of $j,k$ in $I'$ is the
  same as in $I$, which is an independent set. So $(j,k)$ has at most
  one node touching $I'$.
\item One node $j\in S\cap I$. In this case, $j\notin I'$, and hence
  again at most one of $j,k$ belongs to $I'$.
\item One node $k\in S\cap I^*$ but other node $j\notin S\cap I$. This
  means that $j\notin I$, because every neighbor of $k$ in $I$ should
  be included in $S\cap I$. This means that $j\notin I'$, and hence
  only node $k\in I'$ for edge $(j,k)$.
\end{itemize}
Thus $I'$ is an independent set on $T_i(t)$. Also, by Lemma
\ref{lem:set_flip_lp_mp}, we have that
\[
w(I')~ \geq ~ w(I)
\]
However, $I$ is an MWIS, and hence it follows that $I'$ is also an
MWIS of $T_i(t)$. However, by construction, root $i\in I'$, which
violates the fact that $\hat{x}_i(t) = 0$. The contradiction is thus
established, and Part 1 of the theorem is proved. Part 2 is proved in
a similar fashion. \hfill $\blacksquare$

%In particular, suppose now that
%there exists node $i$, an optimum $x^*$ of \LP~ with $x^*_i<1$, and an
%odd time $t$ at which the estimate is $\hat{x}^t_i = 0$.  Lemma
%\ref{lem:comp_tree} implies that the root $i$ is a member of {\em
%every} MWIS of $T_i(t)$. Let $I$ be some MWIS on $T_i(t)$, and define
%$I^*$ as in (\ref{eq:istar}). Note that now, by assmption, root $i\in
%I$ and hence $i\notin I^*$. Again, as before, let $S$ be a maximal
%alternating subtree starting from the root. The three properties of
%$S$ listed above hold in this case also.

\noindent {\em Proof of Lemma \ref{lem:set_flip_lp_mp}:}

The proof of this lemma involves a perturbation argument on the
\LP. For each node $j\in G$, let $m_{j}$ denote the number of times
$j$ appears in $S\cap I$ and $n_{j}$ the number of times it appears in
$S\cap I^*$. Define
\begin{equation}
x ~ = ~ x^* + \epsilon (m - n) \label{eq:x}
\end{equation}
We now show state a lemma that is proved immediately following this
one.

\begin{lemma} \label{lem:feas}
$x$ is a feasible point for \LP, for small enough $\epsilon$.
\end{lemma}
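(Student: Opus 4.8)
The plan is to check the two families of constraints defining the \LP~ polytope directly for $x = x^* + \epsilon(m-n)$, where $m_j$ (resp. $n_j$) is the number of copies of $j$ in $S\cap I$ (resp. $S\cap I^*$): namely nonnegativity $x_j\ge 0$ for every $j\in V$, and the edge inequalities $x_j+x_k\le 1$ for every $(j,k)\in E$. Throughout I use that $x^*$ is itself feasible and that $V$ is finite, so that ``small enough $\epsilon$'' can be taken as a single positive threshold.

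Nonnegativity is the easy half. The only coordinates that decrease are those with $n_j>0$, i.e. nodes $j$ some copy of which lies in $S\cap I^*$; by the very definition of $I^*$, any such $j$ has $x^*_j>0$. Conversely, if $x^*_j=0$ then $n_j=0$ and $x_j=\epsilon m_j\ge 0$. Hence choosing $\epsilon\le \min_{j:\,n_j>0} x^*_j/n_j$ gives $x_j\ge x^*_j-\epsilon n_j\ge 0$ for all $j$. For the edge inequalities, fix $(j,k)\in E$ and write $x_j+x_k=(x^*_j+x^*_k)+\epsilon[(m_j-n_j)+(m_k-n_k)]$. If the edge is slack at $x^*$ the right-hand side stays below $1$ for $\epsilon$ small; taking the minimum over the finitely many slack edges fixes the threshold. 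It remains to treat the \emph{tight} edges, where $x^*_j+x^*_k=1$, and for these it suffices to establish the combinatorial inequality $m_j+m_k\le n_j+n_k$: among all copies of $j$ and $k$ appearing in $S$, at least as many lie in $S\cap I^*$ as in $S\cap I$.

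The crucial structural observation enabling this is dictated by the recursive construction of the computation tree $T_i(t)$: each leaf receives one copy of every graph-neighbor \emph{except the neighbor already present}, so every node of $T_i(t)$ has at most one tree-neighbor of any fixed graph-node type. Consequently the tree edges that project onto the graph edge $(j,k)$ form a \emph{matching}: each copy of $j$ is adjacent to at most one copy of $k$, and vice versa. Since the levels of $S$ alternate with the root in $I^*$, every edge of $S$ joins an $S\cap I$ node to an $S\cap I^*$ node, so each matched pair contributes exactly one copy to each side and is neutral for the inequality; the imbalance can come only from copies of $j,k$ that are \emph{unmatched} within $S$. Here I invoke the maximality of $S$: if $v\in S\cap I$ is a copy of $j$ with a copy-of-$k$ neighbor $u\in I^*$, then $u\in S\cap I^*$ and the edge $(u,v)\in S$ would match $v$. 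Thus an unmatched $S\cap I$ copy of $j$ has no $I^*$-neighbor of type $k$; since $I$ is independent its unique $k$-neighbor lies outside $I\cup I^*$, forcing $x^*_k=0$, unless that $k$-neighbor is absent, which (as $(j,k)\in E$) can only happen at the bottom layer of $T_i(t)$. Using that $t$ is odd, the bottom layer sits on an odd level, hence in $S\cap I^*$, so an even-level $S\cap I$ copy cannot be a bottom leaf. Therefore, on a tight edge with both $x^*_j,x^*_k\in(0,1)$, there are no unmatched $S\cap I$ copies at all, every such copy is matched into a distinct $S\cap I^*$ copy, and $m_j+m_k\le n_j+n_k$ follows.

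I expect the main obstacle to be exactly this tight-edge inequality, and more precisely the remaining case in which a tight edge has an \emph{integral} endpoint (say $x^*_j=1$, $x^*_k=0$, so $n_k=0$), for which the matching argument no longer forbids unmatched $S\cap I$ copies and the clean pairing breaks down. The plan for this case is to charge each unmatched $S\cap I$ copy to a distinct unmatched $S\cap I^*$ copy by climbing the alternating tree, exploiting again that $t$ is odd so that the leaf layer of $S$ falls on the $I^*$ side; this is the same parity bookkeeping on which Lemma \ref{lem:set_flip_lp_mp} relies, now deployed to show the leaf terms favor $S\cap I^*$. Once feasibility of $x$ is in hand, Lemma \ref{lem:set_flip_lp_mp} is immediate: optimality of $x^*$ gives $\langle w,x\rangle\le\langle w,x^*\rangle$, i.e. $\langle w,m-n\rangle\le 0$, which is precisely $w(S\cap I)\le w(S\cap I^*)$.
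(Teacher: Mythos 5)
Your setup and the easy cases coincide with the paper's proof: nonnegativity via $x^*_j>0$ for nodes with copies in $S\cap I^*$, slack edges via finiteness of the graph, and for tight edges the reduction to the counting inequality $m_j+m_k\le n_j+n_k$, established by matching each $S\cap I$ copy of one endpoint along its unique tree edge to an $S\cap I^*$ copy of the other endpoint (using the parity of $t$ to rule out $S\cap I$ copies being bottom leaves, and maximality of $S$ to pull the partner into $S\cap I^*$). Your handling of tight edges with both endpoints fractional is correct and is essentially the paper's argument.

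However, the case you flag as the obstacle --- a tight edge with an integral endpoint, $x^*_j=1$, $x^*_k=0$ --- is left as an unexecuted ``plan'' (charging unmatched $S\cap I$ copies to unmatched $S\cap I^*$ copies by climbing the tree), and this is a genuine gap: the charging is never carried out, and it is not needed. The paper closes this case with a structural fact you never isolate: every tree node in $S\cap I$ has its \emph{parent} in $S\cap I^*$ (the root lies in $I^*$ and $S$ alternates level by level), and by the definition of $I^*$ the parent's graph copy $p$ satisfies $x^*_p>0$; the edge constraint $x^*_j+x^*_p\le 1$ then forces $x^*_j<1$ \emph{strictly} for every graph node possessing a copy in $S\cap I$. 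Hence on a tight edge with $x^*_j=1$ you get $m_j=0$ outright --- no copy of $j$ can ever enter $S\cap I$ --- and every copy of $k$ in $S\cap I$ (not a bottom leaf, by your own parity argument) has its unique type-$j$ tree neighbor outside $I$ (independence of $I$) with $x^*_j=1>0$, hence in $I^*$, hence in $S\cap I^*$ by maximality of $S$. So in this case too all $S\cap I$ copies are matched into distinct $S\cap I^*$ copies, $m_j+m_k\le n_j+n_k$ holds, and no unmatched-copy bookkeeping arises. With this one observation added, your argument is complete and is the same as the paper's; without it, the integral-endpoint case remains unproved.
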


We now use this lemma to finish the proof of Lemma
\ref{lem:set_flip_lp_mp}. Since $x^*$ is an optimum of \LP, it follows
that $w'x \leq w'x^*$, and so $w'm \leq w'n$. However, by definition,
$w'm = w(S\cap I)$ and $w'n = w(S\cap I^*)$. This finishes the
proof. \hfill $\blacksquare$

\noindent {\em Proof of Lemma \ref{lem:feas}:}

We now show that this $x$ as defined in (\ref{eq:x}) is a feasible
point for \LP, for small enough $\epsilon$. To do so we have to check
node constraints $ x_{j}\geq 0$ and edge constraints $x_j + x_k \leq
1$ for every edge $(j,k)\in G$. Consider first the node
constraints. Clearly we only need to check them for any $j$ which has
a copy $j\in I^*\cap S$. If this is so, then by the definition
(\ref{eq:istar}) of $I^*$, $x_{j}^* >0$. Thus, for any $m_{j}$ and
$n_{j}$, making $\epsilon$ small enough can ensure that $x_{j}^* +
\epsilon (m_{j} - n_{j}) \geq 0$.

Before we proceed to checking the edge constraints, we make two
observations. Note that for any node $j$ in the tree, $j\in S\cap I$
then
\begin{itemize}
\item $x^*_j<1$, i.e. the mass $x^*_j$ put on $j$ by the \LP~ optimum
  $x^*$ is {\em strictly} less than 1. This is because of the
  alternating way in which the tree is constructed: a node $j$ in the
  tree is included in $S\cap I$ {\em only if} the parent $p$ of $j$ is
  in $S\cap I^*$ (note that the root $i\in S\cap I^*$ by
  assumption). However, from the definition of $I^*$, this means that
  $x^*_p>0$, i.e. the parent has positive mass at the \LP~ optimum
  $x^*$. This means that $x_j^* < 1$, as having $x_j^* = 1$ would mean
  that the edge constraint $x^*_p + x^*_j \leq 1$ is violated.
\item $j$ is not a leaf of the tree. This is because $S$ alternates
  between $I$ and $I^*$, and starts with $I^*$ at the root in level 1
  (which is odd). Hence $S\cap I$ will occupy even levels of the tree,
  but the tree has odd depth (by assumption $t$ is odd).
\end{itemize}
Now consider the edge constraints.  For any edge $(j,k)$, if the \LP~
optimum $x^*$ is such that the constraint is loose -- i.e. if $x^*_j +
x^*_k <1$ -- then making $\epsilon$ small enough will ensure that $x_j
+ x_k \leq 1$. So we only need to check the edge constraints which are
tight at $x^*$.

For edges with $x^*_j + x^*_k =1$, every time any copy of one of the
nodes $j$ or $k$ is included in $S\cap I$, the other node is included
in $S\cap I^*$. This is because of the following: if $j$ is included
in $S\cap I$, and $k$ is its parent, we are done since this means
$k\in S\cap I^*$. So suppose $k$ is not the parent of $j$. From the
above it follows that $j$ is not a leaf of the tree, and hence $k$
will be one of its children. Also, from above, the mass on $j$
satisfies $x^*_j <1$. However, by assumption $x^*_j + x^*_k =1$, and
hence the mass on $k$ is $x^*_k >0$. This means that the child $k$ has
to be included in $S\cap I^*$.

It is now easy to see that the edge constraints are satisfied: for
every edge constraint which is tight at $x^*$, every time the mass on
one of the endpoints is increased by $\epsilon$ (because of that node
appearing in $S\cap I$), the mass on the other endpoint is decreased
by $\epsilon$ (because it appears $S\cap I^*$). \hfill $\blacksquare$

\section{A Convergent message-passing algorithm}\label{sec:algo}

In Section \ref{sec:iter_anal} we saw that max-product started from
the natural initial condition solves the correct \LP~ at the fixed
point, {\em if it converges}. However, convergence is not guaranteed,
indeed it is quite easy to construct examples where it will not
converge. In this section we present a convergent message-passing
algorithm for finding the MWIS of a graph. It is based on modifying
max-product by drawing upon a dual co-ordinate descent and the barrier
method. The algorithm retains the iterative and distributed nature of
max-product. The algorithm operates in two steps, as described below.

\vspace{.05in}
\noindent\AL($\beps, \delta, \delta_1$)
\vspace{.05in}
\hrule %~~\\
\vspace{.05in}
\begin{itemize}
\item[(o)] Given an MWIS problem, and (small enough) positive
parameters $\beps, \delta$, run sub-routine \DESCENT($\beps,\delta$)
to obtain an output $\lambda^{\beps,\delta} = (\lambda^{\beps,
\delta}_{ij})_{(i,j)\in E}$ %upon convergence (or close to
%convergence). 
$\lambda^{\beps,\delta}$ is an approximate dual of the MWIS problem.
\item[(i)] Next, using (small enough) $\delta_1 >0$, use 
\EST($\lambda^{\beps, \delta}, \delta_1$), to produce an 
estimate for the MWIS as an output of the algorithm. 
\end{itemize}
\vspace{.05in}
\hrule ~~\\
\vspace{.1in}

Next, we describe \DESCENT~ and \EST, state their properties and then
combine them to produce the following result about the convergence,
correctness and bound on convergence time for the overall algorithm.

\subsection{\DESCENT: algorithm}

Here, we describe the \DESCENT~ algorithm. It is influenced by the
max-product and dual coordinate descent algorithm for \DL.~First,
consider the standard coordinate descent algorithm for \DL.~It
operates with variables $\{\lambda_{ij},(i,j)\in E\}$ (with notation
$\lambda_{ij}=\lambda_{ji}$). It is an iterative procedure; in each
iteration $t$ one edge $(i,j)\in E$ is picked\footnote{Edges can be
picked either in round-robin fashion, or uniformly at random.} and
updated
\begin{eqnarray}
\lambda^{t+1}_{ij}  & = &  \max\lf\{ 0, \lf ( w_i - \sum_{k \in
  \cN(i), k \neq j} \lambda^t_{ik}\rf ) ~ , \rf. \nonumber \\
  & ~ & \qquad \qquad \lf. \lf ( w_j - \sum_{k
  \in \cN(j), k \neq i} \lambda^t_{jk}\rf ) \rf\}. \label{eq:dco}
\end{eqnarray}
The $\lambda$ on all the other edges remain unchanged from $t$ to
$t+1$.  Notice the similarity (at least syntactic) between standard
dual coordinate descent (\ref{eq:dco}) and max-product
(\ref{eq:min-sum}). In essence, the dual coordinate descent can be
thought of as a {\em sequential bidirectional} version of the
max-product algorithm. %Equivalently, dual coordinate descent

Since, the dual coordinate descent algorithm is designed so that at
each iteration, the cost of the \DL~is non-increasing, it always
converges in terms of the cost. However, the converged solution may
not be optimum because \DL~ contains the ``non-box'' constraints
$\sum_{j \in \cN(i)} \lambda_{ij} \geq w_i$.~Therefore, a direct usage
of dual coordinate descent is not sufficient.  In order to make the
algorithm convergent with minimal modification while retaining its
iterative message-passing nature, we use barrier (penalty) function
based approach. With an appropriate choice of barrier and using result of
Luo and Tseng \cite{LT91}, we will find the new algorithm to be
convergent.

To this end, consider the following convex optimization problem
obtained from \DL~ by adding a logarithmic barrier for constraint
violations with $\beps \geq 0$ controlling penalty due to violation.
Define
$$ g(\beps, \lambda) = \lf(\sum_{(i,j) \in E}   \lambda_{ij}\rf) - \beps \lf(\sum_{i\in V} \log \lf[\sum_{j \in \cN(i)} \lambda_{ij} - w_i\rf] \rf).$$
Then, the modified \DL~optimization problem becomes
\begin{eqnarray*}
\text{\CPe} : & & {\sf min} ~~ g(\beps, \lambda) \\
{\sf subject ~to} & &  \lambda_{ij} \geq 0, ~\mbox{for all} ~(i,j) \in E. 
\end{eqnarray*}
The algorithm \DESCENT($\beps,\delta$)~ is  coordinate descent on
\CPe, to within tolerance $\delta$, implemented via passing messages
between nodes. We describe it in detail as follows.

\vspace{.05in}
\noindent{\bf \DESCENT($\beps, \delta$)}
\vspace{.05in}
\hrule
\vspace{.1in}
\begin{itemize}
\item[(o)] The parameters are variables $\lambda_{ij}$, one for each
  edge $(i,j)\in E$. We will use notation that $\lambda^t_{ij} =
  \lambda^t_{ji}$. The vector $\lambda$ is iteratively updated, with
  $t$ denoting the iteration number.
  \begin{itemize}
  \item[$\circ$] Initially, set $t = 0$ and $\lambda^0_{ij} = \max\{w_i,
    w_j\}$ for all $(i,j) \in E$.  
  \end{itemize}
\item[(i)] In iteration $t+1$, update parameters as follows:
  \begin{itemize}
  \item[$\circ$] Pick an edge $(i,j) \in E$. The edge selection is
    done in a round-robin manner over all edges. 
  \item[$\circ$] For all $(i', j') \in E, (i',j') \neq (i,j)$ do nothing, i.e.
    $ \lambda^{t+1}_{i'j'} = \lambda^t_{i'j'}.$
  \item[$\circ$] For edge $(i,j)$, nodes $i$ and $j$ exchange messages
    as follows:
    $$ \gamma_{i\rightarrow j}^{t+1} = \left ( w_i - \sum_{k\neq j, k
    \in \cN(i)} \lambda_{ki}^t \right )_+,$$ 
    $$ \gamma_{j\rightarrow
    i}^{t+1} = \left ( w_j - \sum_{k'\neq i, k' \in \cN(j)}
    \lambda_{k'j}^t \right )_+. $$
  \item[$\circ$] Update $\lambda^{t+1}_{ij}$ as follows: with $a =
    \gamma_{i\rightarrow j}^{t+1}$ and $b=\gamma_{j\rightarrow i}^{t+1}$,
    \begin{equation}\label{eq:lambda_update}
      \lambda_{ij}^{t+1} ~ = ~ \left(\frac{a+b+2\beps + \sqrt{(a-b)^2
	  + 4\beps^2}}{2}\right)_+.
    \end{equation}
  \end{itemize} 
\item[(ii)] Update $t = t+1$ and repeat till algorithm converges
  within $\delta$ for each component.
\item[(iii)] Output the vector $\lambda$, denoted by $\lambda^{\beps, \delta}$,
when the algorithm stops.
\end{itemize}
\vspace{.05in}
\hrule
\vspace{.1in}

\noindent{\bf Remark.} The updates in \DESCENT~ above are obtained by
small -- but important -- perturbation of standard dual coordinate
descent (\ref{eq:dco}). To see this, consider the iterative step in
(\ref{eq:lambda_update}).  First, note that
\begin{eqnarray*}
\frac{a+b+2\beps + \sqrt{(a-b)^2 + 4\beps^2}}{2} & > & \frac{a+b+2\beps + \sqrt{(a-b)^2}}{2} \\
& = & \frac{a+b + |a-b|+2\beps}{2} \\
& = & \max(a,b) + \beps.
\end{eqnarray*}
Similarly, 
\begin{eqnarray*}
& & \frac{a+b+2\beps + \sqrt{(a-b)^2 + 4\beps^2}}{2} \\ 
& & \qquad\qquad ~\leq ~ \frac{a+b+2\beps + \sqrt{(a-b)^2 + 4\beps (a-b) + 4\beps^2}}{2} \\
& & \qquad \qquad ~=~ \frac{a+b + |a-b|+4\beps}{2} \\
& & \qquad \qquad ~=~  \max(a,b) + 2\beps.
\end{eqnarray*}
Therefore, we conclude that (\ref{eq:lambda_update}) can be re-written
as
\begin{eqnarray*} 
\lambda_{ij}^{t+1} & = & \beta \beps + ~ \max\left\{-\beta \beps, \lf (  w_i
- \sum_{k\in \cN(i)\backslash j} \lambda^t_{ik}\rf ), \rf. \\
& ~& \qquad \qquad \qquad \qquad \lf. \lf (~ w_j - \sum_{k \in \cN(j)\backslash i} \lambda^t_{kj} \rf
) \right\}, 
\end{eqnarray*} 
where for some $\beta \in (1,2]$ with its precise value dependent on
$\gamma^{t+1}_{i\to j}, \gamma^{t+1}_{j\to i}$.  This small
perturbation takes $\lambda$ close to the true dual optimum. In
practice, we believe that instead of calculating exact value of
$\beta$, use of some arbitrary $\beta \in (1,2]$ should be sufficient.

\subsection{\DESCENT: properties}

The \DESCENT~algorithm finds a good approximation to an optimum of
\DL, for small enough $\beps, \delta$. Furthermore, it always
converges, and does so quickly. The following lemma specifies the
convergence and correctness guarantees of \DESCENT.
\begin{lemma}\label{lem:thm0}
  For given $\beps, \delta > 0$, let $\lambda^t$ be the parameter
  value at the end of iteration $t \geq 1$ under \DESCENT($\beps,
  \delta$).  Then, there exists a unique limit point $\lambda^{\beps,
    \delta}$ such that
  \begin{eqnarray}
    \|\lambda^t - \lambda^{\beps, \delta} \| & \leq&  A \exp\lf(-Bt\rf), \label{eq:lt0}
  \end{eqnarray}
  for some positive constant $A, B$ (which may depend on problem
  parameter, $\beps$ and $\delta$). Let $\lambda^\beps$ be the
  solution of \CPe.~Then,
  $$\lim_{\delta\to 0} \lambda^{\beps, \delta} = \lambda^\beps.$$
  Further, by taking $\beps \to 0$, $\lambda^{\beps}$ goes to
  $\lambda^*$, an optimal solution to the \DL.~
\end{lemma}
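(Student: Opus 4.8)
The plan is to prove Lemma \ref{lem:thm0} in three separable parts that match its three assertions: (1) geometric convergence of the coordinate-descent iterates to a unique limit $\lambda^{\beps,\delta}$; (2) convergence of that limit to the exact minimizer $\lambda^\beps$ of \CPe~as the tolerance $\delta \to 0$; and (3) convergence of the barrier solutions $\lambda^\beps$ to an optimum $\lambda^*$ of \DL~as the penalty parameter $\beps \to 0$. Each part rests on a distinct piece of machinery, so I would treat them independently and then chain them.

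\medskip

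\noindent\emph{Part (1): geometric convergence.} The key is to recognize that \DESCENT, by the Remark's algebra, is genuine block/coordinate descent on the objective $g(\beps,\lambda)$ over the box $\{\lambda_{ij}\ge 0\}$, with edges cycled round-robin. I would first verify that $g(\beps,\cdot)$ is well-defined and strictly convex on the relevant region: the logarithmic barrier $-\beps\sum_i \log[\sum_{j}\lambda_{ij}-w_i]$ forces strict feasibility $\sum_{j\in\cN(i)}\lambda_{ij}>w_i$ at every iterate (the initialization $\lambda^0_{ij}=\max\{w_i,w_j\}$ already satisfies this with slack), and the barrier's Hessian supplies strong convexity on compact sublevel sets. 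The update (\ref{eq:lambda_update}) is exactly the closed-form exact minimizer of $g$ in the single coordinate $\lambda_{ij}$ (solving the scalar first-order condition gives the quadratic whose positive root is the stated formula), so \DESCENT~performs \emph{exact} cyclic coordinate minimization. With these properties in hand, I would invoke the Luo--Tseng result \cite{LT91}: for a convex objective that is a sum of a smooth strongly-convex-on-sublevel-sets term and a separable (here, indicator-of-box) term satisfying a local error bound, cyclic coordinate descent converges \emph{linearly}, i.e. geometrically, to the unique minimizer. This yields both the uniqueness of the limit $\lambda^{\beps,\delta}$ (really the minimizer $\lambda^\beps$ up to the stopping tolerance) and the bound (\ref{eq:lt0}) with $A,B$ depending on the error-bound and Lipschitz constants, which may degrade as $\beps,\delta\to 0$.

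\medskip

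\noindent\emph{Part (2): $\delta\to 0$.} Here $\lambda^{\beps,\delta}$ denotes the iterate at which per-component changes fall below $\delta$; as $\delta\to0$ this approaches the exact minimizer of \CPe. Since \CPe~is a strictly convex program with a unique minimizer $\lambda^\beps$, and since the iterates converge to $\lambda^\beps$ by Part (1), shrinking the stopping tolerance $\delta$ simply lets the algorithm run longer and land arbitrarily close; I would make this precise with a continuity/squeeze argument: $\|\lambda^{\beps,\delta}-\lambda^\beps\|$ is bounded by a quantity that vanishes as $\delta\to 0$, giving $\lim_{\delta\to 0}\lambda^{\beps,\delta}=\lambda^\beps$.

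\medskip

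\noindent\emph{Part (3): $\beps\to 0$.} This is the classical convergence theory of the (interior-point) logarithmic barrier method: as the penalty weight $\beps\to 0$, the minimizers $\lambda^\beps$ of the barrier-augmented objective converge to an optimal solution $\lambda^*$ of the underlying linear program \DL. I would argue that $g(\beps,\lambda)\to \sum_{(i,j)\in E}\lambda_{ij}$ pointwise on the strictly feasible region, that the barrier keeps $\lambda^\beps$ strictly feasible, and that any limit point of $\{\lambda^\beps\}$ as $\beps\to0$ is feasible for \DL~and achieves the optimal \DL~value (the barrier term's contribution is controlled and vanishes along the central path); standard arguments then identify the limit as a \DL-optimum. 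I expect \textbf{Part (1)} to be the main obstacle, because invoking \cite{LT91} cleanly requires carefully checking its exact hypotheses for our specific $g(\beps,\lambda)$ — in particular establishing the local Lipschitzian error bound and confirming that the round-robin coordinate updates (with possibly non-smooth box boundary) fit the framework — and verifying that the iterates remain in a compact strictly-feasible region on which the needed strong convexity and smoothness constants are uniform.
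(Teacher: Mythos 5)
Your decomposition and machinery coincide with the paper's own proof: it too treats the three assertions separately, obtains (\ref{eq:lt0}) by casting \DESCENT\ as cyclic coordinate descent on \CPe\ and invoking Theorem 6.2 of Luo and Tseng \cite{LT91} (restated as Lemma \ref{lem:cd}), gets the $\delta\to 0$ claim by observing that the $\delta=0$ algorithm converges, and gets the $\beps\to 0$ claim from standard barrier-method theory, citing \cite[Prop.~4.1.1]{NLP}. Your observation that (\ref{eq:lambda_update}) is the exact closed-form scalar minimizer of $g$ in the coordinate $\lambda_{ij}$ is a useful point that the paper leaves implicit.

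However, one step of your Part (1) as written would fail: you claim the barrier's Hessian supplies strict (even strong) convexity of $g(\beps,\cdot)$ in $\lambda$ on compact sublevel sets, and deduce from this a \emph{unique} minimizer of \CPe. The barrier is strictly convex only in the node sums $y_i = \sum_{j\in\cN(i)}\lambda_{ij}$, i.e., in $E\lambda$ where $E$ is the edge-to-node incidence map, and this map can have a nontrivial kernel: on an even cycle, perturbing $\lambda$ by alternating $\pm\eta$ around the cycle leaves every node sum unchanged and also leaves the linear term $\sum_{(i,j)\in E}\lambda_{ij}$ unchanged, so $g$ is constant along such directions and the minimizer of \CPe\ need not be unique. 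This is exactly why the paper verifies the composite form $\phi(\unx)=\psi(E\unx)+\sum_i w_i x_i$ required by \cite{LT91} --- strong convexity is demanded only of $\psi$, with $E$ merely having no zero column (true for connected graphs) --- and why Lemma \ref{lem:cd} asserts geometric decay of the distance to the optimal \emph{set} ${\cal X}^*$ rather than to a point; the unique limit point $\lambda^{\beps,\delta}$ then comes from the Luo--Tseng analysis itself (the per-step displacement also decays geometrically, making the iterates convergent to some element of ${\cal X}^*$), not from strict convexity of $g$. A second technicality the paper handles that your sketch should absorb: the \cite{LT91} hypothesis that $\psi\to\infty$ along sequences approaching the boundary of its domain must also cover $\|\lambda\|\to\infty$, which the paper arranges by folding part of the linear term into the barrier component $h$. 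Your Parts (2) and (3) match the paper's argument essentially verbatim.
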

We first discuss the proofs of two facts in Lemma \ref{lem:thm0}: (a)
$\lim_{\delta\to 0} \lambda^{\beps, \delta} = \lambda^\beps$ is a
direct consequence of the fact that if we ran \DESCENT~ algorithm with
$\delta =0$, it converges; (b) the fact that as $\beps \to 0$,
$\lambda^{\beps}$ goes to a dual optimal solution $\lambda^*$ follows
from \cite[Prop. 4.1.1]{NLP}. Now, it remains to establish the
convergence of the \DESCENT($\beps,\delta$) algorithm.  This will
follow as a corollary of result by Luo and Tseng \cite{LT91}. In order
to state the result in \cite{LT91}, some notation needs to be
introduced as follows.

Consider a real valued function $\phi: \RR^n \to \RR$ defined as
$$ \phi(\unx) = \psi(E\unx) + \sum_{i=1}^n w_i x_i, $$
where $E \in \RR^{m\times n}$ is an $m\times n$
matrix with no zero column (i.e., all coordinates of $\unx$
are useful), ${\mathbf w} = (w_i) \in \RR^n$ is
a given fixed vector, and $\psi:\RR^m \to \RR$  is
a strongly convex function on its domain
$$D_\psi = \left\{ \uny \in \RR^m : \psi(\uny) \in (-\infty, \infty) \right\}.$$
We have $D_\psi$ being open and let $\partial D_\psi$ denote
its boundary. We also have that,
along any sequence $\uny_k$ such that $\uny_k \to \partial D_\psi$
(i.e., approaches boundary of $D_\psi$), $\psi(\uny^k) \to \infty$.
The goal is to solve the optimization problem
\begin{eqnarray}
\mbox{minimize} & ~ & \phi(\unx) \nonumber \\
\mbox{over} & ~& \unx \in {\cal X}. \label{eq:cd1}
\end{eqnarray}
In the above, we assume that ${\cal X}$ is box-type, i.e.,
$$ {\cal X} = \prod_{i=1}^n [\ell_i, u_i], ~~\ell_i, u_i \in \RR.$$
Let ${\cal X}^*$ be the set of all optimal solutions of the problem
(\ref{eq:cd1}).
The ``round-robin'' or ``cyclic'' coordinate descent algorithm (the
one used in \DESCENT) for this problem has the following 
convergence property, as proved in Theorem 6.2~\cite{LT91}.
\begin{lemma}\label{lem:cd}
There exist constants $\alpha'$ and $\beta'$ which
may depend on the problem parameters in terms of $g, E, \underline{w}$
such that starting from the initial value $\unx^0$, we have in
iteration $t$ of the algorithm
$$ d(\unx^t, {\cal X}^*) \leq \alpha' \exp\left(- \beta' t\right) d(\unx^0, {\cal X}^*).$$
Here, $d(\cdot, {\cal X}^*)$ denotes distance to the optimal set ${\cal X}^*$.
\end{lemma}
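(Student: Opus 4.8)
The plan is to derive this as an instance of the error-bound-based convergence theory of Luo and Tseng; indeed, once one checks that $\phi$ has the stated form $\phi(\unx) = \psi(E\unx) + \sum_i w_i x_i$ with $\psi$ strongly convex and box feasible set ${\cal X}$, the statement is exactly Theorem 6.2 of \cite{LT91}. I would reproduce the three ingredients that drive that argument. First, since $\psi$ is strongly convex and smooth on the open set $D_\psi$, and since the growth $\psi(\uny)\to\infty$ as $\uny\to\partial D_\psi$ forces the iterates $\unx^t$ to remain in a compact subset of the region where $E\unx^t$ stays away from $\partial D_\psi$, exact cyclic minimization along each coordinate yields a sufficient-decrease inequality of the form $\phi(\unx^t) - \phi(\unx^{t+1}) \geq c\,\|\unx^{t+1} - \unx^t\|^2$, for a constant $c>0$ depending only on the problem data $g, E, \underline{w}$.

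The second, and \emph{structurally crucial}, ingredient is that the affine image $E\unx$ is constant over the optimal set: if $\unx^*, \bar\unx^* \in {\cal X}^*$, then strong convexity of $\psi$ forces $E\unx^* = E\bar\unx^*$. Consequently ${\cal X}^*$ is the intersection of the box ${\cal X}$ with the affine subspace $\{E\unx = \bar\uny\}$ together with the linear optimality conditions, hence a polyhedron. This is what lets one replace the nonlinear objective by a locally polyhedral description of the solution set near any iterate.

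The third ingredient is a local Lipschitzian error bound. Combining Hoffman's bound for the polyhedron ${\cal X}^*$ with the fixed value $\bar\uny = E\unx^*$ and the strong convexity of $\psi$, one obtains a constant $\tau$ with $d(\unx, {\cal X}^*) \leq \tau\, \|\unx - [\unx - \nabla\phi(\unx)]_{{\cal X}}\|$ on the compact region containing the iterates, where $[\cdot]_{{\cal X}}$ denotes Euclidean projection onto the box. Since the cyclic coordinate-descent residual $\|\unx^{t+1} - \unx^t\|$ is comparable to the right-hand side, the error bound and the sufficient-decrease inequality together give $d(\unx^{t+1}, {\cal X}^*)^2 \lesssim \phi(\unx^t) - \phi(\unx^{t+1})$; a matching inequality controlling the cost gap by $d(\unx^t, {\cal X}^*)^2$ closes the loop and yields geometric decay $d(\unx^t, {\cal X}^*) \leq \alpha' \exp(-\beta' t)\, d(\unx^0, {\cal X}^*)$.

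I expect the local error bound of the third step to be the main obstacle: showing that a single Hoffman-type constant $\tau$ works uniformly in a neighborhood of the (possibly non-singleton) optimal set, despite the nonlinearity of $\psi$, is the technical heart of the argument, and it is precisely this step that genuinely exploits both the strong convexity of $\psi$ (to pin down $E\unx$ on ${\cal X}^*$) and the polyhedral box structure of ${\cal X}$.
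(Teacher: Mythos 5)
Your proposal is correct and takes essentially the same route as the paper: the paper offers no proof of this lemma at all, simply citing it as Theorem 6.2 of Luo and Tseng \cite{LT91}, which is exactly the identification you make. Your sketch of what lies inside that theorem — the sufficient-decrease inequality for exact cyclic coordinate minimization, the invariance of $E\unx$ over ${\cal X}^*$ (making the solution set polyhedral), and the Hoffman-type local error bound combined to close the loop — is a faithful account of the Luo--Tseng argument, so there is nothing to correct.
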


\vspace{.05in}

\begin{proof}[Proof of Lemma \ref{lem:thm0}]
It suffices to check that the conditions assumed in 
the statement of Lemma \ref{lem:cd} apply in our set up
of Lemma \ref{lem:thm0} in order to complete the proof.

Note first that the constraints $\lambda_{ij}\geq 0$ in \CPe~ are of
``box-type'', as required by Lemma \ref{lem:cd}. Now, we need to show
that $g(\cdot)$ satisfies the conditions that $\phi(\cdot)$ satisfied
in (\ref{eq:cd1}). By observation, we see that the linear part in
$g(\cdot)$ is $\sum_{ij} \lambda_{ij}$ corresponds to the linear part
in $\phi$. Now, the other part in $g(\cdot)$, which corresponds to
$h(\beps,\lambda)$ where define
$$ h(\beps,\lambda) = -\beps \sum_{i} \log (\sum_{j\in \cN(i)}
\lambda_{ij} - w_i).$$ By definition, the $h(\cdot)$ is strictly
convex on its domain which is an open set as for any $i$, if
$$\sum_{j\in \cN(i)} \lambda_{ij} \downarrow w_i,$$ then $h(\cdot)
\uparrow \infty$. Note that for $h(\cdot) \to \infty$ towards boundary
corresponding to $\|\lambda\| \to \infty$ can be adjusted by
redefining $h(\cdot)$ to include some parts of the linear term in
$g(\cdot)$. Finally, the condition corresponding to $E$ not having any
zero column in (\ref{eq:cd1}) follows for any connected graph, which
is of our interest here. Thus, we have verified conditions of Lemma
\ref{lem:cd}, and hence established the proof of (\ref{eq:lt0}).  This
completes the proof of Lemma \ref{lem:thm0}.
\end{proof}

\subsection{\EST: algorithm}

The algorithm \DESCENT~yields a good approximation of the optimal
solution to \DL,~for small values of $\beps$ and $\delta$.  However,
our interest is in the (integral) optimum of \LP, when it exists.
There is no general procedure to recover an optimum of a linear
program from an optimum of its dual. However, we show that such a
recovery {\em is} possible through our algorithm, called \EST~ and
presented below, for the MWIS problem when $G$ is bipartite with a
unique MWIS.  This procedure is likely to extend for general $G$ when
\LP~relaxation is tight and \LP ~has a unique solution. In the
following $\delta_1$ is chosen to be an appropriately small number,
and $\lambda$ is expected to be (close to) a dual optimum.

\vspace{.05in}
\noindent{\bf \EST($\lambda, \delta_1$).}
\vspace{.05in}
\hrule
\vspace{.1in}

\begin{itemize}

\item[(o)] The algorithm iteratively estimates $\bx = (x_i)$ given
$\lambda$ (expected to be a dual optimum).

\item[(i)] Initially, color a node $i$ {\em gray} and set $x_i=0$ if
$\sum_{j \in \cN(i)} \lambda_{ij} > w_i + \delta_1$.  Color all other
nodes with {\em green} and leave their values unspecified.

\item[(ii)] Repeat the following steps (in any order) until no more
changes can happen:
\begin{itemize}
\item[$\circ$] if $i$ is {\em green} and there exists a {\em gray}
node $j \in \cN(i)$ with $\lambda_{ij} > \delta_1$, then set $x_i = 1$
and color it {\em orange}.

\item[$\circ$] if $i$ is {\em green} and some {\em orange} node $j \in
 \cN(i)$, then set $x_i = 0$ and color it {\em gray}.
\end{itemize}

\item[(iii)] If any node is {\em green}, say $i$, set $x_i =1$ and
color it {\em red}.

\item[(iv)] Produce the output $\bx$ as an estimation.

\end{itemize}

\vspace{.05in}
\hrule
\vspace{.05in}

\subsection{\EST: properties}

\begin{lemma}\label{lem:thm1}
Let $\lambda^*$ be an optimal solution of \DL.~If $G$ is a bipartite
graph with unique MWIS, then the output produced by
\EST($\lambda^*,0$) is the maximum weight independent set of $G$.
\end{lemma}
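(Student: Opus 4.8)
The plan is to run a primal--dual consistency argument driven by complementary slackness, and to invoke bipartiteness together with uniqueness only at the very end to rule out ``undecided'' (green) nodes. Throughout, write $\bx^*$ for the indicator vector of the unique MWIS. First I would fix the \LP\ picture. Since $G$ is bipartite, Lemma \ref{lem:halfint} makes the \LP\ polytope integral, so every \LP\ optimum is a convex combination of integral optima; each integral optimum is an MWIS, so uniqueness of the MWIS forces $\bx^*$ to be the \emph{unique} optimum of \LP. In particular there is no integrality gap, and by LP duality $\bx^*$ and the given dual optimum $\lambda^*$ satisfy complementary slackness (Lemma \ref{lem:cs}): (a) if $\sum_{j\in\cN(i)}\lambda^*_{ij}>w_i$ then $x^*_i=0$; (b) if $x^*_i=1$ then $\sum_{j\in\cN(i)}\lambda^*_{ij}=w_i$; and (c) if $\lambda^*_{ij}>0$ then $x^*_i+x^*_j=1$.

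Next I would prove, by induction on the sequence of recolourings in \EST, the invariant that \emph{every gray node $i$ has $x^*_i=0$ and every orange node $i$ has $x^*_i=1$}. The base case is step~(i): a node is coloured gray exactly when $\sum_j\lambda^*_{ij}>w_i$, which gives $x^*_i=0$ by (a). For the inductive step, when a green node $i$ is turned orange it has a gray neighbour $j$ with $\lambda^*_{ij}>0$; by the hypothesis $x^*_j=0$, and by (c) $x^*_i+x^*_j=1$, so $x^*_i=1$. When a green node $i$ is turned gray it has an orange neighbour $j$, so $x^*_j=1$ by hypothesis, and independence of $\bx^*$ forces $x^*_i=0$. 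Thus every decided (gray or orange) node agrees with $\bx^*$.

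It remains to control the set $H$ of nodes still green when step~(ii) halts, and I would force $H=\emptyset$. First read off the structure of $H$ from the halting condition: a green node can have no orange neighbour and no gray neighbour across a positive-$\lambda^*$ edge, so every edge with $\lambda^*_{ij}>0$ incident to $H$ stays inside $H$; since $\sum_j\lambda^*_{ij}=w_i>0$ on the tight (green) nodes, each node of $H$ has a positive-$\lambda^*$ neighbour in $H$. By (c), $\bx^*$ sums to $1$ along every positive-$\lambda^*$ edge, so on each connected component $C$ of the positive-$\lambda^*$ subgraph of $H$ the vector $\bx^*$ is the indicator of one side of the bipartition; because $G$ is bipartite with parts $L,R$, that side is $C\cap L$ or $C\cap R$, and summing $w_i=\sum_j\lambda^*_{ij}$ over each side yields the tightness identity $w(C\cap L)=w(C\cap R)$. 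Now suppose $H\neq\emptyset$; then it contains a positive-$\lambda^*$ edge, so both $H\cap L$ and $H\cap R$ are nonempty. Form two vectors from $\bx^*$ by replacing its values on $H$ with $\mathbf{1}_{H\cap L}$ and with $\mathbf{1}_{H\cap R}$ respectively. Each is an independent set: inside $H$ every edge crosses $L$--$R$ and receives values $(1,0)$, while every edge leaving $H$ lands on a gray node (value $0$), there being no orange neighbours. Each has the same weight as $\bx^*$, since $w(C\cap L)=w(C\cap R)$ on every component makes the replacement weight-neutral; hence each is an MWIS. As $H\cap L$ and $H\cap R$ are both nonempty, these two MWIS are distinct, contradicting uniqueness. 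Therefore no green node survives, step~(iii) never fires, and the output --- gray set to $0$, orange set to $1$ --- is exactly $\bx^*$.

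I expect the main obstacle to be this last structural step: correctly extracting the component structure of $H$ and, in particular, verifying that the ``flip'' $\mathbf{1}_{H\cap L}$ is simultaneously feasible (which needs bipartiteness, so that all internal edges cross the bipartition and all external edges hit gray nodes) and weight-preserving (which needs the dual-tightness identity $w(C\cap L)=w(C\cap R)$). This is the only place where bipartiteness and uniqueness are genuinely used, and obtaining both feasibility and weight-neutrality from the same complementary-slackness bookkeeping is the delicate part of the argument.
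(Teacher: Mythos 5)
Your proof is correct, but the decisive step is genuinely different from the paper's. The paper argues about the algorithm's \emph{output} $x$ directly: it verifies that $x$ together with $\lambda^*$ satisfies the complementary slackness conditions (x1)--(x3) of Lemma \ref{lem:cs}, with feasibility (x3) being the hard part. There, the paper allows step (iii) to fire (some nodes colored \emph{red}), supposes two adjacent nodes are both set to $1$, collects the conflicting endpoints $S_1\subset V_1$, $S_2\subset V_2$, and zeroes out each side in turn to manufacture two distinct optima, contradicting uniqueness; bipartiteness enters only through the sides $V_1,V_2$, and the key local fact is that edges leaving $S_1\cup S_2$ carry $\lambda^*_{ij}=0$ (else an endpoint would have been colored orange). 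You instead prove the stronger structural statement that under the hypotheses the green set $H$ surviving step (ii) is \emph{empty}, so step (iii) is vacuous and the output is pinned down node-by-node by your gray/orange invariant. Your flip is also different: rather than the conflicting red endpoints, you flip the two bipartition sides of the positive-$\lambda^*$ subgraph of $H$, using CS along positive-$\lambda^*$ edges to force a proper two-coloring on each component and the dual-tightness identity $w(C\cap L)=w(C\cap R)$ (obtained by summing $w_i=\sum_j\lambda^*_{ij}$ over each side, with all positive-$\lambda^*$ edges trapped inside $H$) to make both flips weight-neutral. Each approach buys something: yours yields a sharper picture --- the undecided region is exactly a union of dual-tight components supporting two equal-weight alternatives, which explains \emph{why} uniqueness kills it --- and it sidesteps reasoning about red nodes entirely; the paper's CS-certification route is more modular and is the form more likely to extend beyond bipartite graphs to the general tight-\LP~case the authors allude to, since it never invokes an explicit bipartition. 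One shared fine point you handled correctly: both arguments need the general LP fact that \emph{every} primal--dual pair of optima satisfies complementary slackness (slightly stronger than the existence statement in Lemma \ref{lem:cs}), which you supply via integrality of the bipartite polytope forcing $\bx^*$ to be the unique \LP~optimum. Your argument is complete; no gaps.
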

\begin{proof}
%The graph $G$ is bipartite: let $V = V_1 \cup V_2$, 
%with $V_1 \cap V_2 =\emptyset$ and $E \subset V_1 \times V_2$.  
%It is well-known that there is lack of integrality gap for bipartite graph. 
%That is, there is an optimal solution of \LP~which is integral.  By
%hypothesis of Lemma \ref{lem:thm1}, this solution is unique. Let it
%be $\bx^* = (x^*_i)$. Let 
%$\lambda^* = (\lambda^*_{ij})_{(i,j) \in E}$ denote an optimal
%solution of \DL.~We know that $(\bx^*, \lambda^*)$ satisfy the CS
%condition as stated in Lemma \ref{lem:cs}.

Let $x$ be output of \EST($\lambda^*,0$), and $\bx^*$ the unique
optimal MWIS. To establish $x = \bx^*$, it is sufficient to establish
that $x$ and $\lambda^*$ together satisfy the complimentary slackness
conditions stated in  Lemma \ref{lem:cs}, namely
\begin{enumerate}
\item[(x1)] $x_i (\sum_{j \in \cN(i)} \lambda^*_{ij} - w_i) = 0$ for
all $i \in V$, 
\item[(x2)] $(x_i + x_j - 1) \lambda^*_{ij} = 0$ for all $(i,j) \in
  E$, and 
\item[(x3)] $x$ is a feasible solution for the \IP.
\end{enumerate}

From the way the color {\em gray} is assigned initially, it follows
that either $x_i = 0$ or $\sum_j \lambda_{ij} - w_i = 0$ for all nodes
$i$. Thus (x1) is satisfied.

%Now, recall the coloring scheme of algorithm \EST.~Given
%the complementary slackness condition satisfaction and
%$\bx^*$ being unique, it follows that all nodes assinged
%{\em gray} color must take value $0$ and all nodes assigned
%color {\em orange} must take value $1$. The only issue is
%with nodes taking color {\em red}. Keeping these properties
%in mind, we proceed to establish that $x$ satisfies conditions
%(x1), (x2) and (x3). 
%By our initial rule for coloring nodes {\em gray}, 
%either $\sum_{j \in \cN(i)} \lambda^*_{ij} - w_i = 0$ or 
%$x_i = 0$ for all $i$. Thus, (x1) is clearly satisfied. 

Before proceeding we note that all nodes initially colored gray are
correct, i.e. $x_i = x_i^* = 0$; this is because the optimal $\bx^*$
satisfies (x1). Now consider any node $j$ that is colored orange due
to there being a neighbor $i$ that is one of the initial grays, and
$\lambda_{ij} > 0$. For this node we have that $x_j = x^*_j = 1$,
because $\bx^*$ satisfies (x2). Proceeding in this fashion, it is easy
to establish that all nodes colored gray or orange are assigned values
consistent with the actual MWIS $\bx^*$.

Now to prove (x2); consider a particular edge $(i,j)$. For this, if
$\lambda^*_{ij} = 0$ then the (x2) is satisfied. So suppose
$\lambda^*_{ij} > 0$, but $x_i + x_j \neq 1$. This will happen if both
$x_i = x_j = 0$, or both are equal to 1. Now, both are equal to 0 only
if they are both colored gray, in which case we know that the actual
optima $x_i^* = x_j^* = 1$ as well. But this means that (x2) is
violated by the true optimum $\bx^*$, which is a contradiction. Thus
it has to be that $x_i = x_j = 1$ for violation to occur. However,
this is also a violation of (x3), namely the feasibility of $x$ for
the IP. Thus all that remains to be done is to establish (x3).

Assume now that (x3) is violated, i.e. there exists a subset $E'$ of
the edges whose both endpoints are set to 1. Let $S_1 \subset V_1, S_2
\subset V_2$ be these endpoints. Note that, by assumption, $S_1 \neq
\emptyset, S_2 \neq \emptyset$. We now use $S_1$ and $S_2$ to
construct two distinct optima of \IP, which will be a violation of our
assumption of uniqueness of the MWIS. The two optima, denoted
$\hat{x}$ and $\tilde{x}$, are obtained as follows: in $x$, modify
$x_i = 0$ for all $i \in S_1$ to obtain $\hat{x}$; in $x$ modify $x_i
= 0$ for all $i \in S_2$ to obtain $\tilde{x}$. We now show that both
$\hat{x}$ and $\tilde{x}$ satisfy all three conditions (x1), (x2) and
(x3).

Recall that the nodes in $S_1$ and $S_2$ must have been colored {\em
red} by the algorithm \EST.~Now, we establish optimality of $\hat{x}$
and $\tilde{x}$.  By construction, both $\hat{x}$ and $\tilde{x}$
satisfy (x1) since we have only changed assignment of {\em red} nodes
which were not {\em binding} for constraint (x1).

Now, we turn our attention towards (x2) and (x3) for $\hat{x}$ and
$\tilde{x}$. Again, both solutions satisfy (x2) and (x3) along 
edges $(i, j) \in E$ such that $i \in S_1, j \in S_2$ or else
they would not have been colored {\em red}. 
By construction, they satisfy (x3) along all other edges as well.
Now we show that $\hat{x}, \tilde{x}$ satisfy (x2) along 
edges  $(i,j) \in E$, such that $i \in S_1, j \notin S_2$ or 
$i \notin S_1, j \in S_2$. For this, we claim that all such
edges must have  $\lambda^*_{ij} = 0$: if not, that is
$\lambda^*_{ij} > 0$, then either $i$ or $j$ must have been
colored {\em orange} and an {\em orange} node can not be part 
of $S_1$ or $S_2$. Thus, we have established that both $\hat{x}$ 
and $\tilde{x}$ along with $\lambda^*$ satisfy (x1), (x2) and
(x3). The contradiction is thus established.

Thus, we have established that $x$ along with $\lambda^*$ satisfies
(x1), (x2) and (x3). Therefore, $x$ is the optimal solution of \LP,
and hence of the \IP. This completes the proof.
\end{proof}

Now, consider a version of \EST~ where we check for updating nodes in
a round-robin manner. That is, in an iteration we peform $O(n)$ operations.
Now, we state a simple bound on running time of \EST. 
\begin{lemma}\label{lem:thm1T}
The algorithm \EST~ stops after at most $O(n)$ iterations. 
\end{lemma}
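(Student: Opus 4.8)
The plan is to exploit the fact that the coloring performed by \EST~ is \emph{monotone}: once a node leaves the initial \emph{green} state it never changes color again. This reduces the running-time bound to a simple counting argument, with the per-iteration cost of $O(n)$ already built into the round-robin version.

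First I would establish monotonicity. Both update rules in step (ii) of \EST~ carry the precondition ``if $i$ is \emph{green}''; consequently only green nodes are ever recolored. Once a node is set to \emph{gray} or \emph{orange} in step (ii) (or \emph{red} in step (iii)), no rule applies to it again, so it keeps that color for the remainder of the execution. Hence each of the $n$ nodes changes color at most once over the entire run, and the total number of color changes across the whole execution of \EST~ is at most $n$.

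Next I would bound the number of round-robin iterations. In the round-robin version each iteration is a single sweep of $O(n)$ node-checks, and by the termination condition in step (ii) the main loop halts precisely when a full sweep produces no change. Therefore every iteration of the main loop except the final terminating sweep must recolor at least one node, i.e. consume at least one of the at most $n$ available color changes. This yields at most $n$ productive iterations, followed by one terminating sweep; the initial coloring in step (i) and the final coloring of leftover \emph{green} nodes \emph{red} in step (iii) each add a single $O(n)$ pass. Summing, \EST~ performs $O(n)$ iterations, as claimed.

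The argument is elementary, and the only point requiring care is the monotonicity claim — specifically, confirming from the update rules that a \emph{gray} node can never revert to \emph{green} (and then be recolored \emph{orange}), and symmetrically for \emph{orange}. This is immediate once one notes that the antecedent of each rule in step (ii) restricts modification to green nodes, so \emph{gray} and \emph{orange} are absorbing states for the main loop and each node contributes at most one unit to the total change count. I do not anticipate any genuine obstacle beyond stating this observation cleanly.
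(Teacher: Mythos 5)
Your proposal is correct and follows essentially the same argument as the paper: the paper's (terser) proof likewise observes that each node's status can be updated at most once and that the algorithm halts on a sweep with no updates, yielding the $O(n)$ bound. Your explicit verification that \emph{gray}, \emph{orange}, and \emph{red} are absorbing states merely spells out the monotonicity fact the paper asserts in one line.
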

\begin{proof}
The algorithm stops after the iteration in which no more 
node's status is updated. Since each node can be updated
at most once, with the above stopping condition an algorithm
can run for at most $O(n)$ iterations. This completes the
proof of Lemma \ref{lem:thm1T}.
\end{proof}

\subsection{Overall algorithm: convergence and correctness}

Before stating convergence, correctness and bound on convergence time
of the \AL($\beps,\delta,\delta_1$) algorithm, a few remarks are in
order. We first note that both \DESCENT~and \EST~are iterative
message-passing procedures. Second, when the MWIS is unique,
\DESCENT~need not produce an exact dual optimum for \EST~to obtain the
correct answer.  Finally, it is important to note that the above
algorithm always converges quickly, but may not produce good estimate
when \LP~relaxation is not tight. Next, we state the precise statement
of this result.
\begin{theorem}[Convergence \& Correctness]\label{thm:main1}
The algorithm \AL($\beps, \delta,\delta_1$) converges for any choice
of $\beps, \delta > 0$ and for any $G$. The solution obtained by it is
correct if $G$ is bipartite, \LP~has unique solution and $\beps,
\delta > 0, \delta_1$ are small enough.
\end{theorem}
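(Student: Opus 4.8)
The plan is to assemble Theorem~\ref{thm:main1} as a direct corollary of the two component results, Lemma~\ref{lem:thm0} and Lemma~\ref{lem:thm1}, paying attention to the gap between the idealized hypotheses of those lemmas and the approximate quantities that \AL~actually produces. The convergence half is essentially immediate: by Lemma~\ref{lem:thm0}, \DESCENT($\beps,\delta$) converges geometrically to a unique limit point $\lambda^{\beps,\delta}$ for \emph{any} $\beps,\delta>0$ on \emph{any} graph $G$, and by Lemma~\ref{lem:thm1T}, \EST~terminates in $O(n)$ iterations regardless of its input. Since \AL~is just \DESCENT~followed by \EST, it converges in finite time for every choice of parameters and every $G$. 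I would state this first, as it requires no new argument.

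The correctness half is where the real work lies. Lemma~\ref{lem:thm1} guarantees the correct MWIS only when \EST~is fed the \emph{exact} dual optimum $\lambda^*$ and run with tolerance $\delta_1=0$, whereas \AL~feeds it $\lambda^{\beps,\delta}$ with a strictly positive cutoff $\delta_1$. So the key step is a continuity/robustness argument: I would show that when $G$ is bipartite with a unique MWIS, the coloring decisions made by \EST~(the gray/orange/green classification in steps (i)--(ii)) are determined by strict inequalities at $\lambda^*$, namely $\sum_{j\in\cN(i)}\lambda^*_{ij}-w_i>0$ versus $=0$ at each node, and $\lambda^*_{ij}>0$ versus $=0$ on each edge. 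Because these are strict, there is a positive margin, and Lemma~\ref{lem:thm0} tells us $\lambda^{\beps,\delta}\to\lambda^*$ as $\beps,\delta\to 0$. Hence for $\beps,\delta$ small enough, $\lambda^{\beps,\delta}$ lies within that margin of $\lambda^*$, and choosing $\delta_1$ small enough (but larger than the residual error so that the thresholding still separates the two classes) makes \EST($\lambda^{\beps,\delta},\delta_1$) reproduce exactly the same coloring, and hence the same output, as the idealized \EST($\lambda^*,0$) analyzed in Lemma~\ref{lem:thm1}. Invoking Lemma~\ref{lem:thm1} then finishes the correctness claim.

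The main obstacle I anticipate is making the three small parameters $\beps,\delta,\delta_1$ cohere in the right order. One must be careful that $\delta_1$ is chosen \emph{after} $\beps,\delta$: it has to exceed the approximation error $\|\lambda^{\beps,\delta}-\lambda^*\|$ so that nodes genuinely satisfying $\sum_j\lambda^*_{ij}>w_i$ are still colored gray despite perturbation, yet small enough not to mis-color nodes that are tight at $\lambda^*$. A related subtlety is that \DESCENT~operates on the barrier-smoothed \CPe, so $\lambda^{\beps,\delta}$ never exactly satisfies the tight dual constraints with equality; the argument must quantify how far the barrier pushes each $\sum_j\lambda_{ij}-w_i$ above $w_i$ and confirm this residual shrinks with $\beps$. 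Once the ordering ``fix $\beps$, then $\delta$, then $\delta_1$'' is made explicit and the strict-inequality margins of the unique bipartite solution are used, the two lemmas snap together and the theorem follows. I would present the parameter selection as the crux and keep the continuity estimates at the level of ``for small enough $\beps,\delta,\delta_1$'' rather than grinding out explicit constants.
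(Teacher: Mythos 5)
Your proposal follows essentially the same route as the paper's proof: convergence is read off from Lemmas~\ref{lem:thm0} and~\ref{lem:thm1T}, and correctness is obtained by exactly the paper's perturbation argument --- using $\lambda^{\beps,\delta}\to\lambda^\beps\to\lambda^*$ to get $\lambda^t$ within the positive margins of the strict conditions $\lambda^*_{ij}>0$ and $\sum_{j\in\cN(i)}\lambda^*_{ij}>w_i$ checked by \EST, so that a suitably chosen $\delta_1$ makes \EST($\lambda^t,\delta_1$) produce the same coloring and output as \EST($\lambda^*,0$), at which point Lemma~\ref{lem:thm1} applies. The only difference is cosmetic bookkeeping: the paper fixes the robustness margin first and then chooses $\beps$, the stopping time $T$, and $\delta_1$ relative to it (with explicit $\delta/3n$ bounds), whereas you order the parameters as $\beps,\delta$ then $\delta_1$; both orderings are sound.
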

\begin{proof}
The claim that algorithm \AL($\beps, \delta,\delta_1$) converges for
all values of $\beps, \delta, \delta_1$ and for any $G$ follows 
immediately from Lemmas \ref{lem:thm0}, \ref{lem:thm1} and \ref{lem:thm1T}.
Next, we worry about the correctness property. 

The Lemma \ref{lem:thm0} implies that for $\delta \to 0$, the output
of \DESCENT($\beps,\delta$), $\lambda^{\beps,\delta} \to
\lambda^\beps$, where $\lambda^\beps$ is the solution of \CPe.~Again,
as noted in Lemma \ref{lem:thm0}, $\lambda^{\beps} \rightarrow
\lambda^*$ as $\beps \rightarrow 0$, where $\lambda^*$ is an optimal
solution\footnote{There may be multiple dual optima, and in this case
$\lambda^\beps$ may not have a unique limit. However, every limit
point will be a dual optimum. In that case, the same proof still
holds; we skip it here to keep arguments simple.} of the
\DL.~Therefore, given $\delta > 0$, for small enough $\beps > 0$ we
have
\[
\left | \lambda^{\star,\beps}_{ij} - \lambda^*_{ij} \right | ~ \leq ~
  \frac{\delta}{3n} \quad \text{for all} ~ (i, j) \in E.
\]
We will suppose that the $\beps$ is chosen such.  As noted in the
earlier the algorithm converges for all choices of $\beps$. Therefore,
by Lemma \ref{lem:thm0} there exists large enough $T$ such that 
for $t \geq T$, we have 
\[
\left | \lambda^{\star,\beps}_{ij} - \lambda^t_{ij} \right | ~ \leq ~ \frac{\delta}{3n} \quad
  \text{for all} ~(i,j) \in E.
\]
Thus, for $t \geq T$ we have \beq \left | \lambda^{*}_{ij} -
\lambda^t_{ij} \right | ~ & \leq & ~ \frac{2\delta}{3n} \quad
\text{for all} ~(i,j) \in E. \label{xx1} \eeq

Now, recall Lemma \ref{lem:thm1}. It established that the
\EST($\lambda^*,0$) produces the correct max. weight independent set
as its output under hypothesis of Theorem \ref{thm:main1}.  Also
recall that the algorithm \EST($\lambda^*,0$) checks two conditions:
(a) whether $\lambda^*_{ij} > 0$ for $(i,j) \in E$; and (b) whether
$\sum_{j\in \cN(i)} \lambda^*_{ij} > w_i$. Given that the number of
nodes and edges are finite, there exists a $\delta$ such that (a) and
(b) are robust to noise of $\delta/n$. Therefore, by selection of
small $\delta_1$ for such choice of $\delta$, we find that the output
of \EST($\lambda^t,\delta_1$) algorithm will be the same as that of
\EST($\lambda^*,0$).  This completes the proof.
\end{proof}

\section{MAP Estimation as an MWIS Problem}\label{sec:map_as_mwis}

In this section we show that any MAP estimation problem is equivalent
to an MWIS problem on a suitably constructed graph with node
weights. This construction is related to the ``overcomplete basis''
representation \cite{WJ03}.  Consider the following canonical MAP
estimation problem: suppose we are given a distribution $q(\by)$ over
vectors $\by = (y_1,\ldots,y_M)$ of variables $y_m$, each of which can
take a finite value. Suppose also that $q$ factors into a product of
strictly positive functions, which we find convenient to denote in
exponential form:
\[
q(\by) ~ = ~ \frac{1}{Z} \prod_{\alpha \in A} \exp \lf (
\phi_\alpha(\by_\alpha) \rf ) ~ = ~ \frac{1}{Z} \exp \lf (
\sum_{\alpha\in A} \phi_\alpha(\by_\alpha) \rf )
\]
Here $\alpha$ specifies the domain of the function $\phi_\alpha$, and
$\by_\alpha$ is the vector of those variables that are in the domain
of $\phi_\alpha$. The $\alpha$'s also serve as an index for the
functions. $A$ is the set of functions. The MAP estimation problem is
to find a maximizing assignment $\by^* \in \arg \max_\by q(\by)$.

We now build an auxillary graph $\tG$, and assign weights to its
nodes, such that the MAP estimation problem above is equivalent to
finding the MWIS of $\tG$.  There is one node in $\tG$ for each pair
$(\alpha,\by_\alpha)$, where $\by_\alpha$ is an {\em assignment}
(i.e. a set of values for the variables) of domain $\alpha$. We will
denote this node of $\tG$ by $\delta(\alpha,\by_\alpha)$.

%Besides these nodes, there is also one node for every pair $(m,y_m)$
%of variable index $m$ and particular value $y_m$ of that variable. We
%will denote this node by $\delta(m,y_m)$.

%Given this set of nodes, we add the set of edges on $\tG$. Add an edge
%between any two nodes $\delta(\alpha,\by^1_\alpha)$ and
%$\delta(\alpha,\by^2_\alpha)$ that have the same domain
%$\alpha$. Also, add an edge between any node
%$\delta(\alpha,\by^1_\alpha)$ and a node $\delta(m,y^2_m)$ if {\em
%(a)} $m\in \alpha$, i.e. the variable is a member of the domain, and
%{\em (b)} the assignment $y^2_m$ in $\delta(m,y^2_m)$ is different
%from the assignment $y^1_m$ in $\delta(\alpha,\by^1_\alpha)$. In
%words, we put an edge between all pairs of nodes that correspond to
%{\em inconsistent} assignments.

There is an edge in $\tG$ between any two nodes
$\delta(\alpha_1,\by_{\alpha_1}^1)$ and
$\delta(\alpha_2,\by_{\alpha_2}^2)$ if and only if there exists a
variable index $m$ such that
\begin{enumerate}
\item $m$ is in both domains, i.e. $m\in \alpha_1$ and $m\in
  \alpha_2$, and
\item the corresponding variable assignments are different,
  i.e. $y^1_m \neq y^2_m$.
\end{enumerate}
In other words, we put an edge between all pairs of nodes that
correspond to {\em inconsistent} assignments.  Given this graph $\tG$,
we now assign weights to the nodes. Let $c>0$ be any number such that
$c+\phi_\alpha(\by_\alpha) > 0$ for all $\alpha$ and $\by_\alpha$. The
existence of such a $c$ follows from the fact that the set of
assignments and domains is finite. Assign to each node
$\delta(\alpha,\by_\alpha)$ a weight of $c+\phi_\alpha(\by_\alpha)$.
\begin{lemma}\label{lem:map2mwis}
Suppose $q$ and $\tG$ are as above.
(a) If $\by^*$ is a MAP estimate of $q$, let $\delta^* = \{
  \delta(\alpha,\by^*_\alpha)\,|\, \alpha\in A\}$ be the set of nodes
  in $\tG$ that correspond to each domain being consistent with
  $\by^*$. Then, $\delta^*$ is an MWIS of $\tG$.
(b) Conversely, suppose $\delta^*$ is an MWIS of $\tG$. Then, for
  every domain $\alpha$, there is exactly one node
  $\delta(\alpha,\by^*_\alpha)$ included in $\delta^*$.  Further, the
  corresponding domain assignments$\{\by^*_\alpha\,|\,\alpha\in A\}$
  are consistent, and the resulting overall vector $\by^*$ is a MAP
  estimate of $q$.
\end{lemma}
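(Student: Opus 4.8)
The plan is to prove the two directions of Lemma~\ref{lem:map2mwis} by exploiting the two defining features of $\tG$: the edge structure encodes inconsistency, and the node weights are shifted copies of the potentials $\phi_\alpha$. The central bookkeeping identity I would establish first is that for \emph{any} globally consistent assignment $\by$ (i.e. any genuine configuration of the original variables), the set $\delta(\by) = \{\delta(\alpha,\by_\alpha) : \alpha\in A\}$ is an independent set of $\tG$ whose total weight is $\sum_{\alpha} \bigl(c + \phi_\alpha(\by_\alpha)\bigr) = c|A| + \sum_\alpha \phi_\alpha(\by_\alpha)$. Independence is immediate from the edge rule: two nodes $\delta(\alpha_1,\by_{\alpha_1})$ and $\delta(\alpha_2,\by_{\alpha_2})$ coming from the \emph{same} global $\by$ agree on every shared variable, so they are never joined by an edge. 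The weight formula shows that maximizing $\sum_\alpha\phi_\alpha$ over consistent assignments is \emph{exactly} maximizing node-weight over independent sets of the form $\delta(\by)$, up to the additive constant $c|A|$ that does not affect the argmax.

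\textbf{Part (a).} Given a MAP estimate $\by^*$, I would let $\delta^*=\delta(\by^*)$ and show it is an MWIS. It is independent by the observation above. For optimality, suppose some independent set $J$ of $\tG$ has strictly larger weight. The key structural claim to prove here is that the unique-coverage phenomenon of Part (b) forces any independent set weighing at least as much as $\delta(\by^*)$ to pick exactly one node per domain (a deficient domain costs at least $c$ in foregone weight, and since each $c+\phi_\alpha>0$ no other node can compensate once we also account for the conflict an ``extra'' node would create). That claim reduces $J$ to a globally consistent assignment $\by'$ with $\sum_\alpha\phi_\alpha(\by'_\alpha) > \sum_\alpha\phi_\alpha(\by^*_\alpha)$, contradicting that $\by^*$ is MAP. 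Hence $\delta^*$ is an MWIS.

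\textbf{Part (b).} This is the direction I expect to carry the real work, because I must \emph{derive} consistency rather than assume it. Let $\delta^*$ be any MWIS. I would argue in two moves. First, each domain $\alpha$ is represented \emph{at most} once, since any two distinct nodes $\delta(\alpha,\by_\alpha^1),\delta(\alpha,\by_\alpha^2)$ with $\by_\alpha^1\neq\by_\alpha^2$ differ on some variable and are therefore adjacent, so independence forbids picking two. Second, each domain is represented \emph{at least} once: if some $\alpha$ contributed no node, I could add an appropriately chosen node $\delta(\alpha,\by_\alpha)$ consistent with the partial assignment already selected, strictly increasing weight by $c+\phi_\alpha>0$ without creating a conflict, contradicting maximality. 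Combining the two gives exactly one node per domain. Global consistency then follows because any pairwise inconsistency (two selected nodes disagreeing on a shared variable) would place an edge inside $\delta^*$, violating independence; and agreement on every shared variable across all domains is precisely what lets the local assignments $\by_\alpha^*$ glue into a single global $\by^*$. Finally, the weight identity shows $\by^*$ maximizes $\sum_\alpha\phi_\alpha$ over all consistent assignments, hence is a MAP estimate of $q$.

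\textbf{Main obstacle.} The delicate point, and the step I would write most carefully, is the ``add a missing node without conflict'' argument in Part (b): I must verify that when a domain $\alpha$ is unrepresented, there genuinely exists an assignment $\by_\alpha$ whose node is non-adjacent to everything already in $\delta^*$. This requires checking that the variables of $\alpha$ can be set consistently with whatever partial constraints the already-chosen nodes impose on the variables shared with $\alpha$; the fact that each variable is shared among domains in a controlled way (and that one is free to choose $\by_\alpha$ to match the existing values on shared coordinates) is what makes this go through, and it is exactly the place where an imprecise argument could hide a gap.
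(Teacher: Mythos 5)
Your proposal is correct and takes essentially the same route as the paper's proof: a correspondence between globally consistent assignments $\by$ and maximal independent sets of $\tG$ (exactly one node per domain, consistency enforced by the edges), with the weight identity $w(\delta(\by)) = c|A| + \sum_\alpha \phi_\alpha(\by_\alpha)$ and positivity of the weights $c+\phi_\alpha$ forcing any MWIS to be maximal, so that maximizing weight over independent sets matches maximizing $\sum_\alpha \phi_\alpha$ over assignments. The ``add a missing node without conflict'' step you flag as delicate is precisely the maximality claim the paper dispatches with ``it is clear that,'' and your resolution --- the already-selected nodes are pairwise non-adjacent, hence agree on all shared variables, so one can choose $\by_\alpha$ matching those values on the shared coordinates --- is exactly the right justification (and is more careful than the paper's own write-up).
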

\begin{proof}
A {\em maximal} independent set is one in which every node is either
in the set, or is adjacent to another node that is in the set. Since
weights are positive, any MWIS has to be maximal. For $\tG$ and $q$
as constructed, it is clear that
\begin{enumerate}
\item If $\by$ is an assignment of variables, consider the
  corresponding set of nodes $\{\delta(\alpha,\by_\alpha)\,|\,
  \alpha\in A\}$. Each domain $\alpha$ has exactly one node in this
  set. Also, this set is an independent set in $\tG$, because the
  partial assignments $\by_\alpha$ for all the nodes are consistent
  with $\by$, and hence with each other. This means that there will
  not be an edge in $\tG$ between any two nodes in the set.
\item Conversely, if $\Delta$ is a maximal independent set in $\tG$,
  then all the sets of partial assignments corresponding to each node
  in $\Delta$ are all consistent with each other, and with a global
  assignment $\by$.
\end{enumerate}
There is thus a one-to-one correspondence between maximal independent
sets in $\tG$ and assignments $\by$. The lemma follows from this
observation. 
\end{proof}

%\begin{center}
%\begin{tabular}{ccc}
%\parbox{3.7in}{{\bf Example.}

\begin{example} Let $y_1$ and $y_2$ be binary variables with joint distribution 
\[
q(y_1,y_2) ~ = ~ \frac{1}{Z}\exp(\theta_1 y_1 + \theta_2 y_2 +
\theta_{12}y_1 y_2)
\]
where the $\theta$ are any real numbers. The corresponding $\tG$ is
shown in the Figure \ref{fig:0}. Let $c$ be any number such that $c+\theta_1$,
$c+\theta_2$ and $c+\theta_{12}$ are all greater than 0. The weights
on the nodes in $\tG$ are: $\theta_1 + c$ on node ``1'' on the left,
$\theta_2 + c$ for node ``1'' on the right, $\theta_{12}+c$ for the
node ``11'', and $c$ for all the other nodes. 
\end{example} 
%& \hspace{0.2in} &
%\parbox{1.2in}{
\begin{figure}[htb]
\begin{center}
\epsfig{file=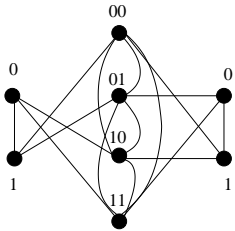,width=1.2in}
\end{center}
\label{fig:0}
\caption{An example of  reduction from MAP problem to max. weight
independent set problem.}
\end{figure}
%}
%\end{tabular}
%\end{center}

\section{Discussion}

We believe this paper opens several interesting directions for
investigation.  In general, the exact relationship between max-product
and linear programming is not well understood. Their close similarity
for the MWIS problem, along with the reduction of MAP estimation to an
MWIS problem, suggests that the MWIS problem may provide a good first
step in an investigation of this relationship. 

Our novel message-passing algorithm and the reduction of MAP
estimation to an MWIS problem immediately yields a new message-passing
algorithm for general MAP estimation problem. It would be 
interesting to investigate the power of this algorithm on 
more general discrete estimation problems.

%for the MWIS problem, along with the reduction of MAP
%estimation to an MWIS problem, reveals tantalizing

%Motivated to understand property of max-product algorithm, 
%we first transformed the problem of MAP estimation for
%arbitrary MRF to finding MWIS in graph with positive
%weights.  Under this transformation, it is sufficient to
%study the max-product for MWIS problem to understand max-product
%%for arbitrary MRF. We characterize
%properties of the fixed-points of the max-product algorithm.
%The key conclusion there is that though max-product and LP
%relaxation are quite close, there are example that suggest
%that neither of these two scheme dominate the other. Next,
%we design a convergent modification of the max-product
%algorithm for arbitrary graph converging to \DL~optimal. We
%show that it is possible to obtain optimal MWIS based on
%this algorithm when $G$ is bipartite and MWIS is unique. 
%We strongly believe that a similar method will provide
%unique MWIS for arbitrary graph $G$ as long as \LP~is tight
%and it has a unique solution. Our algorithm is likely to impact
%applications such as scheduling in networks, etc. 

\end{document}